
\documentclass[nohyperref]{article}

\usepackage{microtype}
\usepackage{graphicx}
\usepackage{subfigure}
\usepackage{booktabs} 

\usepackage{breakurl}

\usepackage{hyperref}




\usepackage[accepted]{icml2023}

\usepackage{amsmath}
\usepackage{amssymb}
\usepackage{mathtools}
\usepackage{amsthm}

\usepackage[capitalize,noabbrev]{cleveref}

\usepackage{breakurl}

\DeclareMathOperator{\End}{End}

\DeclareMathOperator{\Hom}{Hom}

\DeclareMathOperator{\Bell}{B}

\DeclareMathOperator{\sgn}{sgn}
\DeclareMathOperator{\Stab}{Stab}


\usepackage{mathdots}
\usepackage{bm}
\usepackage{nicematrix}
\usepackage{xcolor}

\usepackage{tabularray}
\usepackage{comment}

\usepackage[title]{appendix}
\usepackage{appendix}
\usepackage{xurl}

\usepackage{ytableau}
\usepackage{tikzit}

\tikzstyle{node}=[fill=white, draw=black, shape=circle, minimum size=1mm, ultra thick]
\tikzstyle{small box}=[fill=white, draw=black, shape=rectangle, minimum height=0.5cm, minimum width=0.5cm, ultra thick]
\tikzstyle{weyl}=[fill=white, draw={rgb,255: red,0; green,0; blue,109}, shape=rectangle, minimum height=0.5cm, minimum width=0.5cm]
\tikzstyle{filled_node}=[fill=black, draw=black, shape=circle, minimum size=1mm, ultra thick]

\tikzstyle{thick}=[-, ultra thick]
\tikzstyle{blue_thick}=[-, ultra thick, draw=blue]
\tikzstyle{dashes}=[-, dashed, draw={rgb,255: red,191; green,191; blue,191}, dash pattern=on 2mm off 1mm, fill={rgb,255: red,244; green,228; blue,0}]
\tikzstyle{thick_arrow}=[ultra thick, ->]
\tikzstyle{dash_1}=[-, dashed]
\tikzstyle{dash_2}=[-, dashed, fill={rgb,255: red,246; green,235; blue,255}]
\tikzstyle{dash_3}=[-, dashed, fill={rgb,255: red,229; green,255; blue,181}]
\tikzstyle{dash_4}=[-, dashed, fill={rgb,255: red,255; green,209; blue,153}]
\tikzstyle{red_thick}=[-, ultra thick, draw=red]
\tikzstyle{dash_5}=[-, dashed, fill={rgb,255: red,225; green,255; blue,254}]
\tikzstyle{jellyfish}=[-, ultra thick, fill={rgb,255: red,34; green,48; blue,255}]

\usepackage{graphicx}

\theoremstyle{plain}
\newtheorem{theorem}{Theorem}[section]
\newtheorem{proposition}[theorem]{Proposition}
\newtheorem{lemma}[theorem]{Lemma}

\theoremstyle{definition}
\newtheorem{definition}[theorem]{Definition}

\theoremstyle{remark}
\newtheorem{remark}[theorem]{Remark}

\usepackage[textsize=tiny]{todonotes}

\icmltitlerunning{
	How Jellyfish Characterise Alternating Group Equivariant Neural Networks
}

\begin{document}

\twocolumn[
\icmltitle{
	How Jellyfish Characterise Alternating Group Equivariant Neural Networks
}




\begin{icmlauthorlist}
	\icmlauthor{Edward Pearce--Crump}{imperial}
\end{icmlauthorlist}

\icmlaffiliation{imperial}{Department of Computing, Imperial College London, United Kingdom}

\icmlcorrespondingauthor{Edward Pearce--Crump}{ep1011@ic.ac.uk}

\icmlkeywords{Machine Learning, ICML}

\vskip 0.3in
]



\printAffiliationsAndNotice{}  

\begin{abstract}
	We provide a full characterisation of all of the possible alternating group ($A_n$) equivariant neural networks 
	whose layers are some tensor power of $\mathbb{R}^{n}$.
	In particular, 
	we find a basis of matrices for the learnable, linear, $A_n$--equivariant layer functions between such tensor power spaces
	in the standard basis of $\mathbb{R}^{n}$.
	We also describe how our approach generalises to the construction of neural networks that are equivariant to local symmetries.
\end{abstract}

\section{Introduction}

There has been a growing research effort in deep learning to develop
neural network architectures that can be used to learn efficiently from data that possesses an underlying symmetry. 
These architectures guarantee that the functions that are learned are subject to a geometric property,
known as \textit{equivariance},
that is connected with the symmetry group.
Group equivariant neural networks are important due to the additional advantages that they offer over traditional multilayer perceptron models. 
For example, they commonly display high levels of parameter sharing within each layer, resulting in significantly fewer parameters overall. 
This often leads to models that show improved prediction performance on new data.

The symmetry group that has received the most attention, 
in terms of it
being explicitly incorporated into 
neural network architectures,
is the group of all permutations on some fixed number of objects, called the symmetric group.
Creating neural networks that are equivariant to permutations is highly desirable as many data structures, such as sets and graphs, exhibit natural permutation symmetry.
It is easily understood that the labelling of the elements of a set or the vertices of a graph is arbitrary; 
hence, it is crucial to ensure that the functions that are learned from such data 
do not depend on how the data is labelled.


In this paper, 
we look instead at how to construct neural networks that are equivariant to the alternating group.
The alternating group is an index two subgroup of the symmetric group consisting solely
of all 
of the 
even permutations.
Alternating group symmetry has proven to be particularly useful when learning from spherical image data that has been discretely represented on an icosahedron
\cite{zhang};
in constructing convolutional neural networks on an icosahedron
\cite{cohen19d};
and in estimating polynomials that are invariant to the action of the alternating group
\cite{kicki}.

Specifically,
we give a full characterisation of all of the possible alternating group equivariant neural networks whose layers are some tensor power of $\mathbb{R}^{n}$ by finding a basis of matrices for the learnable, linear, alternating group equivariant layer functions between such tensor power spaces in the standard basis of $\mathbb{R}^{n}$.


Our approach is similar the one presented in the papers written by
Pearce--Crump~\yrcite{pearcecrump, pearcecrumpB}.
They used different sets of set partition diagrams to characterise
all of the learnable, linear, group equivariant layer functions 
between tensor power spaces in the standard basis of $\mathbb{R}^{n}$
for the following groups:
the symmetric group $S_n$; the orthogonal group $O(n)$; the symplectic group $Sp(n)$; and the special orthogonal group $SO(n)$.
We will show that in the case of the alternating group $A_n$,
the layer functions can also be characterised by
certain
sets of set partition diagrams.

To do this, we use a concept that was first introduced by Comes~\yrcite{comes}, namely, so-called \textit{jellyfish}.
In their paper, they largely determined the theory of alternating group equivariance; however, they relied heavily on the language of category theory in their exposition.
We simplify their approach, and provide proofs that are more accessible 
to 
the machine learning community.

The main contributions of this paper, which appear in Section \ref{altgroupsection} onwards, are as follows:
\begin{enumerate}
	\item We are the first to show how the combinatorics underlying set partition diagrams, together with some jellyfish, serves as the theoretical foundation for constructing neural networks that are equivariant to the alternating group when the layers are some tensor power of $\mathbb{R}^{n}$.
	\item 
		In particular, we find a basis for the learnable, linear, $A_n$--equivariant layer functions between such tensor power spaces in the standard basis of $\mathbb{R}^{n}$.
	\item We extend our approach to show how to construct neural networks that are equivariant to local symmetries.
\end{enumerate}

\section{Preliminaries}

We choose our field of scalars to be $\mathbb{R}$ throughout. 
Tensor products are also taken over $\mathbb{R}$, unless otherwise stated.
Also, we let $[n]$ represent the set $\{1, \dots, n\}$. 

Recall that a representation of a group $G$ is a choice of vector space $V$ over $\mathbb{R}$ and a group homomorphism
\begin{equation} \label{grouprephom}
	\rho : G \rightarrow GL(V)	
\end{equation}
We choose to focus on finite-dimensional vector spaces $V$ 
that are some tensor power of $\mathbb{R}^{n}$
in this paper.

We often abuse our terminology by calling $V$ a representation of $G$, even though the representation is technically the homomorphism $\rho$.
When the homomorphism $\rho$ needs to be emphasised alongside its vector space $V$, we will use the notation $(V, \rho)$.

\section{$(\mathbb{R}^{n})^{\otimes k}$, a representation of both $S_n$ and $A_n$}

Recall that $S_n$ is the group of all permutations on $[n]$, and that $A_n$ is the subgroup of $S_n$ consisting of all permutations on $[n]$ whose image under the function
\begin{equation}
	\sgn : S_n \rightarrow \{\pm 1\}
\end{equation}
is $+1$, where $\sgn$ is defined, for all $\sigma \in S_n$, as
\begin{equation}
	\sgn(\sigma) 
	\coloneqq
	\begin{cases}
		+1 & \text{if $\sigma$ is an even permutation in $S_n$} \\
		-1 & \text{if $\sigma$ is an odd permutation in $S_n$} \\
	\end{cases}
\end{equation}
We have that $\mathbb{R}^{n}$ is a representation of $S_n$ via its (left) action on the basis $\{e_a \mid a \in [n]\}$ which is extended linearly,
where, specifically, the action is given by
\begin{equation}
	\sigma \cdot e_a = e_{\sigma(a)} \text{ for all } \sigma \in S_n \text{ and } a \in [n]
\end{equation}
Restricting this action to $A_n$ and extending linearly shows that $\mathbb{R}^{n}$ is also a representation of $A_n$.

We also have that any $k$-tensor power of $\mathbb{R}^{n}$, $(\mathbb{R}^{n})^{\otimes k}$, for any $k \in \mathbb{Z}_{\geq 0}$, is a representation of $S_n$,
since the elements 
\begin{equation} \label{tensorelementfirst}
	e_I \coloneqq e_{i_1} \otimes e_{i_2} \otimes \dots \otimes e_{i_k} 
\end{equation}
for all $I \coloneqq (i_1, i_2, \dots, i_k) \in [n]^k$ form a basis of 
$(\mathbb{R}^{n})^{\otimes k}$,
and the action of $S_n$ that maps a basis element of 
$(\mathbb{R}^{n})^{\otimes k}$
of the form (\ref{tensorelementfirst}) to
\begin{equation}
	e_{\sigma(I)} \coloneqq e_{\sigma(i_1)} \otimes e_{\sigma(i_2)} \otimes \dots \otimes e_{\sigma(i_k)} 
\end{equation}
can be extended linearly. 
Again, restricting this action to $A_n$ and extending linearly shows that 
$(\mathbb{R}^{n})^{\otimes k}$
is also a representation of $A_n$.

We denote the representation of $S_n$ by $\rho_{k}$. 
We will use the same notation for the restriction of this representation to $A_n$, with the context making clear that it is the restriction of the $S_n$ representation.

For more on the representation theory of the symmetric and alternating groups, see \cite{sagan} and \cite{tolli}.

\section{Group Equivariant Neural Networks} \label{Groupequivnnssection}

Group equivariant neural networks are constructed by alternately composing linear and non-linear $G$-equivariant maps between representations of a group $G$. 
The following is based on the material presented in \cite{lim}.

We first define \textit{$G$-equivariance}:
\begin{definition} \label{Gequivariance}
	Suppose that $(V, \rho_{V})$ and $(W, \rho_{W})$
	are two representations of a group $G$.

	A map $\phi : V \rightarrow W$ is said to be $G$-equivariant if,
	for all $g \in G$ and $v \in V$,
\begin{equation} \label{Gequivmapdefn}
	\phi(\rho_{V}(g)[v]) = \rho_{W}(g)[\phi(v)]
\end{equation}
	The set of all \textit{linear} $G$-equivariant maps between $V$ and $W$ is denoted by $\Hom_{G}(V,W)$. 
	When $V = W$, we write this set as $\End_{G}(V)$.
	It can be shown that $\Hom_{G}(V,W)$ is a vector space over $\mathbb{R}$, and that $\End_{G}(V)$ is an algebra over $\mathbb{R}$.
	See \cite{segal} for more details. 
\end{definition}

A special case of $G$-equivariance is \textit{$G$-invariance}:
\begin{definition}
	The map $\phi$ given in Definition \ref{Gequivariance} is said to be $G$-invariant if $\rho_{W}$ is defined to be the $1$-dimensional trivial representation of $G$.
	As a result, $W = \mathbb{R}$.
\end{definition}


We can now define the type of neural network that is the focus of this paper:
\begin{definition} \label{Gneuralnetwork}
	An $L$-layer $G$-equivariant neural network $f_{\mathit{NN}}$ is a composition of \textit{layer functions}
	\begin{equation}
		f_{\mathit{NN}} \coloneqq f_L \circ \ldots \circ f_{l} \circ \ldots \circ f_1
  	\end{equation}
	such that the $l^{\text{th}}$ layer function is a map of representations of $G$
	\begin{equation}
		f_l: (V_{l-1}, \rho_{l-1}) \rightarrow (V_l, \rho_l)
	\end{equation}
	that is itself a composition
  	\begin{equation} \label{Glayerfidefn}
	  	f_l \coloneqq \sigma_l \circ \phi_l
  	\end{equation}
	of a learnable, linear, $G$-equivariant function
	\begin{equation} \label{Glayerlinear}
		\phi_l : (V_{l-1}, \rho_{l-1}) \rightarrow (V_l, \rho_l)
	\end{equation}
	together with 
	a fixed, non-linear activation function 
	\begin{equation}	
		\sigma_l: (V_l, \rho_l) \rightarrow (V_l, \rho_l)
	\end{equation}
	such that
	\begin{enumerate}
		\item $\sigma_l$ is a $G$-equivariant map, as in (\ref{Gequivmapdefn}), and
		\item $\sigma_l$ acts pointwise (after a basis has been chosen for each copy of $V_l$ in $\sigma_l$.)
	\end{enumerate}
\end{definition}

We focus on the learnable, linear, $G$-equivariant functions in this paper, since the non-linear functions are fixed. 
Note that, after picking a basis for each layer space in the set $\{V_l\}$, 
the number of parameters 
that appear
in a matrix representation of a learnable, linear, $G$-equivariant function of the form (\ref{Glayerlinear}), that is, in a weight matrix for the $l^{\text{th}}$ layer function, 
is equal to the number of matrices that appear in a basis for
$\Hom_{G}(V_{l-1},V_l)$.
Furthermore, 
given a basis of
$\Hom_{G}(V_{l-1},V_l)$,
the weight matrix itself is a weighted linear combination of these basis matrices, where each coefficient in the linear combination is a parameter to be learned.

\begin{remark}
	The entire neural network $f_{\mathit{NN}}$ is itself a $G$-equivariant function because 
	it can be shown that
	the composition of any number of $G$-equivariant functions is itself $G$-equivariant.
\end{remark}

\begin{remark}
	One way of making 
	a neural network of the form given in Definition \ref{Gneuralnetwork}
	$G$-invariant 
	is by choosing the representation in the final layer
	to be the $1$-dimensional trivial representation of $G$. 
\end{remark}

\section{Symmetric Group}

In this section, we recall the technique 
of using set partitions 
to find a basis of 
	$\Hom_{S_n}((\mathbb{R}^{n})^{\otimes k}, (\mathbb{R}^{n})^{\otimes l})$
in the standard basis of $\mathbb{R}^{n}$
since it will feature heavily in what follows for the alternating group.
For more details, see~\cite{maron2018}, \cite{ravanbakhsh}, and
\cite{pearcecrump}.

\subsection{Set Partitions} \label{setpartitions}


For $l, k \in \mathbb{Z}_{\geq 0}$, consider the set $[l+k]$ having $l+k$ elements.
We can create a set partition of $[l+k]$ by partitioning it into a number of subsets.
We call the subsets of a set partition \textit{blocks}.
Define $\Pi_{l+k}$ to be the set of all set partitions of $[l+k]$. 
Let $\Pi_{l+k, n}$ be the subset of $\Pi_{l+k}$ consisting of all set partitions of $[l+k]$ having at most $n$ blocks.

As the number of set partitions in $\Pi_{l+k}$ having exactly $t$ blocks is the Stirling number
$
\begin{Bsmallmatrix}
l+k\\
t 
\end{Bsmallmatrix}
$
of the second kind, we see that the number of elements in $\Pi_{l+k}$ is equal to $\Bell(l+k)$, the $(l+k)^{\text{th}}$ Bell number, and that the number of elements in $\Pi_{l+k,n}$ is therefore equal to
\begin{equation}
	\sum_{t=1}^{n} 
		\begin{Bmatrix}
		l+k\\
		t 
		\end{Bmatrix}
	\coloneqq \Bell(l+k,n)
\end{equation}
the $n$-restricted $(l+k)^{\text{th}}$ Bell number.

For each set partition $\pi$ in $\Pi_{l+k}$, we can define a diagram $d_{\pi}$ 
that has two rows of vertices and edges between vertices such that there are
\begin{enumerate}
	\item $l$ vertices on the top row, labelled by $1, \dots, l$
	\item $k$ vertices on the bottom row, labelled by $l+1, \dots, l+k$, and
	\item 
		the edges between the vertices are such that the connected components of $d_\pi$ correspond to the blocks of $\pi$.
\end{enumerate}
In particular, $d_{\pi}$ represents the equivalence class of all diagrams with connected components equal to the blocks of $\pi$. 

\begin{figure}[t]
	\begin{center}
	\scalebox{0.5}{\tikzfig{partklelement1}}
		\caption{A diagram $d_{\pi}$ corresponding to the set partition $\pi$ given in (\ref{examplesetpart}) with $l=3$ and $k=5$.}
	\label{partklelement1}
	\end{center}
\end{figure}

For example, if $l = 3$ and $k = 5$, a diagram 
corresponding to the set partition 
\begin{equation} \label{examplesetpart}
	\pi \coloneqq \{1, 5 \mid 2, 4 \mid 3, 6, 8 \mid 7 \}
\end{equation}
in $\Pi_{8}$ consisting of $4$ blocks is given in Figure \ref{partklelement1}.

We can define another diagram related to each set partition 
diagram $d_\pi$:
it is the flattened version of $d_\pi$, where we pull the top row of $l$ vertices down and to the left of the bottom row of $k$ vertices, maintaining the order of the labels.

For example, the flattened diagram 
corresponding to the set partition diagram shown in Figure \ref{partklelement1} is
given in Figure \ref{partklelement2}.

\begin{figure}[t]
	\begin{center}
	\scalebox{0.5}{\tikzfig{partklelement2}}
		\caption{The flattened diagram corresponding to 
		the set partition diagram shown in Figure \ref{partklelement1}.}
	\label{partklelement2}
	\end{center}
\end{figure}

\subsection{
A Basis of 
	$\Hom_{S_n}((\mathbb{R}^{n})^{\otimes k}, (\mathbb{R}^{n})^{\otimes l})$
}



We begin by noting that
	$\Hom((\mathbb{R}^{n})^{\otimes k}, (\mathbb{R}^{n})^{\otimes l})$
has a standard basis of matrix units
\begin{equation}
	\{E_{I,J}\}_{I \in [n]^l, J \in [n]^k}
\end{equation}
where $E_{I,J}$ has a $1$ in the $(I,J)$ position and is $0$ elsewhere.

Hence, 
expressing $f \in
	\Hom((\mathbb{R}^{n})^{\otimes k}, (\mathbb{R}^{n})^{\otimes l})$
in this basis as
\begin{equation}
	f = \sum_{I \in [n]^l}\sum_{J \in [n]^k} f_{I,J}E_{I,J}
\end{equation}
it can be shown that
	$f \in \Hom_{S_n}((\mathbb{R}^{n})^{\otimes k}, (\mathbb{R}^{n})^{\otimes l})$ if and only if 
\begin{equation} \label{centeq1}
	f_{\sigma(I),\sigma(J)} = f_{I,J} 
\end{equation}
for all $\sigma \in S_n$ and $I \in [n]^{l}, J \in [n]^{k}$.

Concatenating each $I \in [n]^l, J \in [n]^{k}$ into a single element $(I, J) \in [n]^{l+k}$, 
we see from (\ref{centeq1}) that the basis elements of 
$\Hom_{S_n}((\mathbb{R}^{n})^{\otimes k}, (\mathbb{R}^{n})^{\otimes l})$
are in bijective correspondence with the orbits coming from the action of $S_n$ on $[n]^{l+k}$, where $\sigma \in S_n$ acts on the pair $(I,J)$ by 
\begin{equation} \label{sigmapairaction}
	\sigma(I,J) \coloneqq (\sigma(I),\sigma(J))
\end{equation}
As $S_n$ acts on $[n]$ transitively, it acts on $[n]^{l+k}$ transitively, meaning that the orbits under this action actually partition the set $[n]^{l+k}$ into equivalence classes.

Furthermore, we can define a bijection between the orbits 
coming from the action of $S_n$ on $[n]^{l+k}$
and the set partitions $\pi$ in $\Pi_{l+k}$ having at most $n$ blocks.
Indeed, if $(I,J)$ is a class representative of an orbit, then,
replacing momentarily the elements of $J$ by $i_{l+m} \coloneqq j_m$ for all $m \in [k]$,
so that
\begin{align} \label{pairedblocks}
	(I,J) 
	& = (i_1, i_2, \dots, i_l, j_1, j_2, \dots, j_k) \nonumber \\
	& = (i_1, i_2, \dots, i_l, i_{l+1}, i_{l+2}, \dots i_{l+k})
\end{align}
we define the bijection by
\begin{equation} \label{blockbij}
	i_x = i_y \iff x, y \text{ are in the same block of } \pi	
\end{equation}
for all $x$, $y \in [l+k]$.

The bijection (\ref{blockbij}) is independent of the choice of class representative since
\begin{equation}
	i_x = i_y \iff \sigma(i_x) = \sigma(i_y) \text{ for all } \sigma \in S_n
\end{equation}
Notice that the LHS of (\ref{blockbij}) is checking for an equality on the elements of $[n]$, whereas the RHS is separating the elements of $[l+k]$ into blocks; hence $\pi$ must have at most $n$ blocks.

As a result, we have shown the following.

\begin{theorem} \label{Sncorresp}
The basis elements of 
$\Hom_{S_n}((\mathbb{R}^{n})^{\otimes k}, (\mathbb{R}^{n})^{\otimes l})$,
in the standard basis of $\mathbb{R}^{n}$,
correspond bijectively with all set partitions $\pi$ in $\Pi_{l+k}$ having at most $n$ blocks, 
	which correspond bijectively with the orbits coming from the action of $S_n$ on $[n]^{l+k}$.
\end{theorem}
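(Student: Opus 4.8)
The plan is to chain together two bijections: one between a basis of $\Hom_{S_n}((\mathbb{R}^{n})^{\otimes k}, (\mathbb{R}^{n})^{\otimes l})$ and the orbits of $S_n$ acting diagonally on $[n]^{l+k}$ via \eqref{sigmapairaction}, and a second between those orbits and the set partitions in $\Pi_{l+k}$ having at most $n$ blocks. The first correspondence is representation-theoretic and the second is purely combinatorial, so I would establish them separately and then compose.

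For the first bijection, I would begin from the matrix-unit basis $\{E_{I,J}\}$ of the ambient space $\Hom((\mathbb{R}^{n})^{\otimes k}, (\mathbb{R}^{n})^{\otimes l})$ and translate the equivariance requirement $f \circ \rho_k(\sigma) = \rho_l(\sigma) \circ f$ into a condition on the scalar coefficients $f_{I,J}$. Since $\rho_k(\sigma)$ and $\rho_l(\sigma)$ act as permutation matrices on the basis tensors, expanding both composites and comparing coefficients in the $E_{I,J}$ basis yields exactly \eqref{centeq1}, that is, $f_{\sigma(I),\sigma(J)} = f_{I,J}$ for all $\sigma \in S_n$. After concatenating $(I,J)$ into a single element of $[n]^{l+k}$, this says precisely that the coefficient function is constant on each $S_n$-orbit. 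As the orbits partition $[n]^{l+k}$, the space of functions that are constant on orbits has the orbit indicator functions as a basis; transporting this back through the matrix-unit identification shows that the matrices obtained by summing $E_{I,J}$ over each orbit span $\Hom_{S_n}((\mathbb{R}^{n})^{\otimes k}, (\mathbb{R}^{n})^{\otimes l})$, and they are linearly independent because they have pairwise disjoint support. These basis matrices are therefore indexed by the orbits.

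For the second bijection, I would send a tuple $(i_1,\dots,i_{l+k}) \in [n]^{l+k}$ to the partition of $[l+k]$ whose blocks are the fibres of the index map, i.e. $x$ and $y$ lie in the same block exactly when $i_x = i_y$, as in \eqref{blockbij}. I would then verify three properties. Well-definedness on orbits follows because $i_x = i_y \iff \sigma(i_x) = \sigma(i_y)$, so every representative of an orbit induces the same partition. Injectivity amounts to showing that two tuples inducing the same partition differ by a permutation of $[n]$, which one constructs by matching the distinct values occurring on corresponding blocks and extending arbitrarily to a bijection of $[n]$. Surjectivity onto the partitions with at most $n$ blocks follows by assigning a distinct label in $[n]$ to each block, which is possible precisely when the number of blocks does not exceed $n$.

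I expect the main obstacle to be the careful index bookkeeping needed to derive \eqref{centeq1}: one must track how $\rho_k$ and $\rho_l$ permute the lower and upper indices separately and confirm that invariance under the simultaneous action of $\sigma$ is the correct condition, rather than two independent conditions on $I$ and $J$. The block-count constraint in the surjectivity step is the other delicate point, though it becomes transparent once phrased as the statement that the number of distinct values drawn from $[n]$ is bounded by $n$: a partition with more than $n$ blocks cannot be realised by any tuple in $[n]^{l+k}$, which is exactly why the correspondence lands on $\Pi_{l+k,n}$ rather than on all of $\Pi_{l+k}$.
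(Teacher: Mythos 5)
Your proposal is correct and follows essentially the same route as the paper: derive the fixed-point condition $f_{\sigma(I),\sigma(J)} = f_{I,J}$ from equivariance in the matrix-unit basis, concatenate the indices so that this becomes constancy on the $S_n$-orbits of $[n]^{l+k}$, and then identify each orbit with the set partition of $[l+k]$ given by the fibres of the index map, with the at-most-$n$-blocks constraint arising because the indices take values in $[n]$. The only difference is that you spell out the injectivity, surjectivity, and linear-independence checks that the paper leaves implicit, which is a welcome addition rather than a departure.
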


We can obtain the basis elements themselves, as follows:

Taking a set partition $\pi$ in $\Pi_{l+k,n}$,
and denoting the number of blocks in $\pi$ by $t$, we obtain a labelling of the blocks by letting $B_1$ be the block that contains the number $1 \in [l+k]$, and iteratively letting $B_j$, for $1 < j \leq t$, be the block that contains the smallest number in $[l+k]$ that is not in $B_1 \cup B_2 \cup \dots \cup B_{j-1}$.

Using this labelling of the blocks, we can create an element of $[n]^{l+k}$ by letting the $i^{\text{th}}$ position, $i \in [l+k]$, be the label of the block containing the number $i$.
This is called the \textit{block labelling} of $\pi$.
Denote it by the pair $(I_{\pi},J_{\pi})$, where clearly $I_{\pi} \in [n]^l$ and $J_{\pi} \in [n]^{k}$.

In doing so, we have created a representative of the orbit for the $S_n$ action on $[n]^{l+k}$ corresponding to the set partition $\pi \in \Pi_{l+k,n}$
under the bijection given in (\ref{blockbij}). 
Denote this orbit by $O_{S_n}((I_{\pi},J_{\pi}))$.

We form a basis element of 
$\Hom_{S_n}((\mathbb{R}^{n})^{\otimes k}, (\mathbb{R}^{n})^{\otimes l})$,
denoted by $X_{\pi}$, 
by adding together all matrix units 
whose indexing pair $(I,J)$ appears
in the orbit $O_{S_n}((I_{\pi},J_{\pi}))$; that is,
\begin{equation} \label{equivbasiselement}
	X_\pi \coloneqq \sum_{(I,J) \in O_{S_n}((I_{\pi},J_{\pi}))} E_{I,J}
\end{equation}
We see that $X_\pi$ is a basis element of 
$\Hom_{S_n}((\mathbb{R}^{n})^{\otimes k}, (\mathbb{R}^{n})^{\otimes l})$
by (\ref{centeq1}).

Doing this for each set partition $\pi$ in $\Pi_{l+k,n}$, or, equivalently, for all of the orbits coming from the action of $S_n$ on $[n]^{l+k}$, gives:
\begin{theorem}
	\label{equivbasislplusk}
	For $l, k \in \mathbb{Z}_{\geq 0}, n \in \mathbb{Z}_{\geq 1}$,
	we have that
	\begin{equation}
		\{X_\pi \mid \pi \in \Pi_{l+k,n}\}
	\end{equation}
	is a basis of 
	$\Hom_{S_n}((\mathbb{R}^{n})^{\otimes k}, (\mathbb{R}^{n})^{\otimes l})$,
	and so 
	\begin{align}
		\dim 
		\Hom_{S_n}((\mathbb{R}^{n})^{\otimes k}, (\mathbb{R}^{n})^{\otimes l})
		& = 
		\Bell(l+k, n)
	\end{align}
\end{theorem}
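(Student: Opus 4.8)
The plan is to assemble this statement from the bijective correspondence already established in Theorem~\ref{Sncorresp} together with the explicit construction of the $X_\pi$ in (\ref{equivbasiselement}). The work splits naturally into three tasks: verifying that each proposed element $X_\pi$ lies in $\Hom_{S_n}((\mathbb{R}^{n})^{\otimes k}, (\mathbb{R}^{n})^{\otimes l})$ and that the collection spans it, checking linear independence, and counting the elements to read off the dimension. Most of the substance has in fact been done upstream, so I would treat this largely as a bookkeeping argument that glues the earlier pieces together.

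First I would confirm that each $X_\pi$ genuinely lies in $\Hom_{S_n}((\mathbb{R}^{n})^{\otimes k}, (\mathbb{R}^{n})^{\otimes l})$. By the construction in (\ref{equivbasiselement}), the coefficient $(X_\pi)_{I,J}$ equals $1$ exactly when $(I,J)$ lies in the orbit $O_{S_n}((I_\pi,J_\pi))$ and is $0$ otherwise; since an orbit is by definition closed under the action (\ref{sigmapairaction}), this coefficient function satisfies the equivariance condition (\ref{centeq1}), and hence $X_\pi \in \Hom_{S_n}$. For spanning, I would take an arbitrary $f \in \Hom_{S_n}$ and use (\ref{centeq1}) to argue that its coefficient $f_{I,J}$ depends only on the orbit of $(I,J)$. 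Writing $c_\pi$ for the common value of $f$ on the orbit $O_{S_n}((I_\pi,J_\pi))$, one obtains $f = \sum_{\pi \in \Pi_{l+k,n}} c_\pi X_\pi$, because the orbits, indexed by the set partitions $\pi$ via Theorem~\ref{Sncorresp}, partition $[n]^{l+k}$, so every matrix unit $E_{I,J}$ is accounted for exactly once.

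For linear independence, I would observe that distinct set partitions $\pi$ correspond to disjoint orbits, so the supports of the $X_\pi$, namely the sets of matrix units appearing with nonzero coefficient, are pairwise disjoint. A vanishing linear combination $\sum_\pi c_\pi X_\pi = 0$ therefore forces every $c_\pi = 0$, simply by reading off the coefficient at any matrix unit in the support of a given $X_\pi$.

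Finally, for the dimension, there is exactly one basis element $X_\pi$ per set partition $\pi \in \Pi_{l+k,n}$, and the number of such set partitions was shown in the Set Partitions subsection to be the $n$-restricted Bell number $\Bell(l+k,n)$; this yields the stated dimension formula. I do not anticipate a genuine obstacle, since the one substantive step, the orbit/set-partition bijection, is already supplied by Theorem~\ref{Sncorresp}. The only point demanding care is to use the fact that the orbits partition $[n]^{l+k}$ consistently in both directions: it guarantees spanning (every matrix unit appears in some $X_\pi$) and, through disjointness of supports, independence.
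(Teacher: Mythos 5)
Your proposal is correct and follows essentially the same route as the paper, which derives the theorem directly from the orbit/set-partition bijection of Theorem~\ref{Sncorresp} and the construction (\ref{equivbasiselement}); you have merely made explicit the spanning, disjoint-support independence, and counting steps that the paper leaves implicit. No gaps.
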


\section{Alternating Group} \label{altgroupsection}

In exactly the same way as for the symmetric group, we can show that the basis elements of 
	$\Hom_{A_n}((\mathbb{R}^{n})^{\otimes k}, (\mathbb{R}^{n})^{\otimes l})$
	are in bijective correspondence with the orbits coming from the action of $A_n$ on $[n]^{l+k}$.

However, the major difference 
between the symmetric group and the alternating group 
is that the $A_n$ orbits on $[n]^{l+k}$, and consequently the basis elements of 
	$\Hom_{A_n}((\mathbb{R}^{n})^{\otimes k}, (\mathbb{R}^{n})^{\otimes l})$,
are not necessarily in bijective correspondence with the set partitions of $[l+k]$ having at most $n$ blocks.
This is because some of the $S_n$ orbits on $[n]^{l+k}$ become the disjoint union of more than one $A_n$ orbit on $[n]^{l+k}$.
We say that an $S_n$ orbit on $[n]^{l+k}$ \textit{splits} if it is the disjoint union of more than one $A_n$ orbit on $[n]^{l+k}$. 
Our first task is to identify which $S_n$ orbits on $[n]^{l+k}$ split and which do not.

\begin{theorem} [$S_n$ orbits split] \label{SnOrbitSplit}

	Let $\pi$ be a set partition in $\Pi_{l+k,n}$ having some $t$ blocks, and consider its corresponding $S_n$ orbit on $[n]^{l+k}$,
	$O_{S_n}((I_\pi, J_\pi))$.

	If $t \leq n-2$, then $O_{S_n}((I_\pi, J_\pi))$ does not split.

	If $t = n-1$ or $n$, then $O_{S_n}((I_\pi, J_\pi))$ splits into a disjoint union of exactly two $A_n$ orbits.
\end{theorem}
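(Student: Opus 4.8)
The plan is to argue entirely through the Orbit--Stabilizer Theorem, together with the elementary fact that $A_n$ has index two in $S_n$. First I would compute the stabilizer of the block labelling $(I_\pi, J_\pi)$ under the $S_n$ action given in (\ref{sigmapairaction}). Since $\pi$ has $t$ blocks, the block labelling uses exactly the $t$ values $1, \dots, t$ from $[n]$, with two positions of the tuple carrying the same value precisely when they lie in the same block of $\pi$. A permutation $\sigma \in S_n$ fixes $(I_\pi, J_\pi)$ if and only if it fixes every value that actually appears, namely $1, \dots, t$, while it may permute the remaining $n-t$ values $t+1, \dots, n$ arbitrarily. Hence $\Stab_{S_n}((I_\pi, J_\pi)) \cong S_{n-t}$, realised as the symmetric group on $\{t+1, \dots, n\}$.

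The key structural principle I would then invoke is that, for the index-two subgroup $A_n \leq S_n$ acting on $[n]^{l+k}$, an $S_n$-orbit coincides with a single $A_n$-orbit exactly when its point stabilizer in $S_n$ contains an odd permutation; otherwise, when the stabilizer consists entirely of even permutations, the $S_n$-orbit breaks into exactly two $A_n$-orbits of equal size. This follows from comparing $\left|O_{S_n}((I_\pi, J_\pi))\right| = |S_n| / |\Stab_{S_n}((I_\pi, J_\pi))|$ with $\left|O_{A_n}((I_\pi, J_\pi))\right| = |A_n| / |\Stab_{A_n}((I_\pi, J_\pi))|$, where $\Stab_{A_n}((I_\pi, J_\pi)) = \Stab_{S_n}((I_\pi, J_\pi)) \cap A_n$. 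If $\Stab_{S_n}((I_\pi, J_\pi))$ contains an odd permutation, then $\Stab_{A_n}((I_\pi, J_\pi))$ has index two in it, forcing $\left|O_{A_n}((I_\pi, J_\pi))\right| = \left|O_{S_n}((I_\pi, J_\pi))\right|$, so there is no split; whereas if $\Stab_{S_n}((I_\pi, J_\pi)) \subseteq A_n$, then $\Stab_{A_n}((I_\pi, J_\pi)) = \Stab_{S_n}((I_\pi, J_\pi))$ and $\left|O_{A_n}((I_\pi, J_\pi))\right| = \tfrac{1}{2}\left|O_{S_n}((I_\pi, J_\pi))\right|$, so exactly two equal-sized $A_n$-orbits are needed to fill the single $S_n$-orbit.

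It then remains to decide, for each value of $t$, whether $S_{n-t}$ contains an odd permutation. A transposition lives in $S_{n-t}$ precisely when $n - t \geq 2$, that is, when $t \leq n-2$; in that case $\Stab_{S_n}((I_\pi, J_\pi))$ contains an odd permutation and the orbit does not split. When $t = n-1$ or $t = n$ we have $n - t \leq 1$, so $S_{n-t}$ is the trivial group, the stabilizer consists of the identity alone, it is therefore contained in $A_n$, and the orbit splits into exactly two $A_n$-orbits. This yields both halves of the statement.

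I expect the only genuinely delicate point to be the justification of the index-two splitting principle, in particular verifying that the number of $A_n$-orbits in the splitting case is \emph{exactly} two rather than merely at least two; but this is immediate from the size comparison $\left|O_{A_n}((I_\pi, J_\pi))\right| = \tfrac{1}{2}\left|O_{S_n}((I_\pi, J_\pi))\right|$ established above, so once the stabilizer has been correctly identified as $S_{n-t}$ the remainder of the argument is entirely routine.
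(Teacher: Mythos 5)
Your proposal is correct and follows essentially the same route as the paper: both arguments rest on the Orbit--Stabilizer Theorem and on comparing $|O_{A_n}|$ with $|O_{S_n}|$ via the stabilizers, the only cosmetic difference being that you compute $\Stab_{S_n}((I_\pi,J_\pi)) \cong S_{n-t}$ once and apply a uniform ``does the stabilizer contain an odd permutation'' criterion, whereas the paper identifies $\Stab_{A_n}$ directly in each of the three cases (obtaining $\{e\}$ for $t=n-1,n$ and $A_{n-t}$ for $t\leq n-2$). Your packaging is slightly cleaner, but the content is identical, so no further comparison is needed.
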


\begin{proof} 
	Consider the case where $\pi$ has $t = n$ blocks. 
	Choose some arbitrary element $(I_\alpha, J_\alpha) \in O_{S_n}((I_\pi, J_\pi))$, and 
	consider
	$\Stab_{A_n}((I_\alpha, J_\alpha))$,
	the stabilizer of $(I_\alpha, J_\alpha)$ under the action of $A_n$.
	Since all of the elements in $[n]$ appear in the tuple $(I_\alpha, J_\alpha)$, 
	the only element of $A_n$ that fixes the entries of $(I_\alpha, J_\alpha)$
	is the identity element; that is, $\Stab_{A_n}((I_\alpha, J_\alpha)) = \{e_{A_n}\}$.
	Hence, by the Orbit--Stabilizer Theorem,
	we have that
	$\left|O_{A_n}((I_\alpha, J_\alpha))\right| = \frac{n!}{2}$.
	Since 
	$\left|O_{S_n}((I_\pi, J_\pi))\right| = n!$ in this case,
	and because
	$(I_\alpha, J_\alpha)$ was an arbitary element of $O_{S_n}((I_\pi, J_\pi))$, 
	we see that,
	by another application of the Orbit--Stabilizer Theorem,
	$O_{S_n}((I_\pi, J_\pi))$ splits into a disjoint union of two $A_n$ orbits.

	Similarly, for the case where $\pi$ has $t = n-1$ blocks, we see that 
	$\Stab_{A_n}((I_\alpha, J_\alpha)) = \{e_{A_n}\}$,
	and so, by the same argument,  
	$O_{S_n}((I_\pi, J_\pi))$ splits into a disjoint union of two $A_n$ orbits.

	Finally, for $t \leq n-2$, we have that $\Stab_{A_n}((I_\alpha, J_\alpha)) \cong A_{n-t}$, and so, by the Orbit--Stabilizer Theorem,
	we see that
	$\left|O_{A_n}((I_\alpha, J_\alpha))\right| = 
	\left|O_{S_n}((I_\pi, J_\pi))\right|$.
	Consequently, the $S_n$ orbit $O_{S_n}((I_\pi, J_\pi))$ does not split.
\end{proof}

\begin{remark}
	The case for $t \leq n - 2$ in Theorem \ref{SnOrbitSplit} was proven independently in \cite{maron19a}.
\end{remark}

As a result, we immediately obtain the following two theorems: 
\begin{theorem} \label{AnSetPart}
	Let $k, l \in \mathbb{Z}_{\geq 0}$ and $n \in \mathbb{Z}_{\geq 1}$.

	If $\pi$ is a set partition in $\Pi_{l+k,n}$ having $n-2$ blocks or fewer, then $\pi$ corresponds bijectively to a basis element of 
	$\Hom_{A_n}((\mathbb{R}^{n})^{\otimes k}, (\mathbb{R}^{n})^{\otimes l})$,

	Otherwise, $\pi$ corresponds to two basis elements of 
	$\Hom_{A_n}((\mathbb{R}^{n})^{\otimes k}, (\mathbb{R}^{n})^{\otimes l})$.
\end{theorem}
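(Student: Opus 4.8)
The plan is to derive this as an immediate corollary of Theorem~\ref{SnOrbitSplit}, combined with the orbit description of the basis that we already have in hand. First I would establish the $A_n$ analogue of the construction carried out for $S_n$ in (\ref{centeq1})--(\ref{equivbasiselement}). Expanding an arbitrary $f \in \Hom((\mathbb{R}^{n})^{\otimes k}, (\mathbb{R}^{n})^{\otimes l})$ in the matrix-unit basis $\{E_{I,J}\}$, the verbatim argument shows that $f$ lies in $\Hom_{A_n}((\mathbb{R}^{n})^{\otimes k}, (\mathbb{R}^{n})^{\otimes l})$ if and only if $f_{\sigma(I),\sigma(J)} = f_{I,J}$ holds for all $\sigma \in A_n$, using the same action (\ref{sigmapairaction}) now restricted to $A_n$. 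Consequently, summing the matrix units $E_{I,J}$ over a single $A_n$ orbit on $[n]^{l+k}$ produces one basis element, and ranging over all such orbits gives a basis; hence the basis elements of $\Hom_{A_n}((\mathbb{R}^{n})^{\otimes k}, (\mathbb{R}^{n})^{\otimes l})$ are in bijection with the $A_n$ orbits on $[n]^{l+k}$, exactly as asserted at the opening of Section~\ref{altgroupsection}.

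Next I would connect these $A_n$ orbits back to set partitions. By Theorem~\ref{Sncorresp}, the $S_n$ orbits on $[n]^{l+k}$ correspond bijectively to the set partitions $\pi \in \Pi_{l+k,n}$. Since $A_n$ is a subgroup of $S_n$, every $A_n$ orbit is contained in a unique $S_n$ orbit, and each $S_n$ orbit is the disjoint union of the $A_n$ orbits it contains. Therefore the number of basis elements of $\Hom_{A_n}((\mathbb{R}^{n})^{\otimes k}, (\mathbb{R}^{n})^{\otimes l})$ attached to a fixed $\pi$ equals the number of $A_n$ orbits into which the single $S_n$ orbit $O_{S_n}((I_\pi, J_\pi))$ decomposes.

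Finally I would simply invoke Theorem~\ref{SnOrbitSplit}. Writing $t$ for the number of blocks of $\pi$, membership of $\Pi_{l+k,n}$ forces $t \leq n$, so the complementary hypothesis ``more than $n-2$ blocks'' is precisely the case $t \in \{n-1, n\}$. For $t \leq n-2$, Theorem~\ref{SnOrbitSplit} states that $O_{S_n}((I_\pi, J_\pi))$ does not split, i.e.\ it is itself a single $A_n$ orbit, so $\pi$ contributes exactly one basis element; for $t = n-1$ or $t = n$ the orbit splits into exactly two $A_n$ orbits, so $\pi$ contributes exactly two. This is the claimed dichotomy. I expect essentially no serious obstacle here, since all the representation-theoretic content is already absorbed into Theorem~\ref{SnOrbitSplit}; the only point needing a line of care is the first paragraph, namely checking that the passage from the equivariance condition to ``constancy on orbits'' and then to the orbit-sum basis transfers unchanged from $S_n$ to $A_n$, which it does because that derivation used only that the group permutes the matrix units via (\ref{sigmapairaction}) and that orbits partition $[n]^{l+k}$.
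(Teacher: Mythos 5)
Your proposal is correct and follows essentially the same route as the paper: the paper likewise treats this theorem as an immediate consequence of Theorem~\ref{SnOrbitSplit}, having already noted at the start of Section~\ref{altgroupsection} that the basis elements of $\Hom_{A_n}((\mathbb{R}^{n})^{\otimes k}, (\mathbb{R}^{n})^{\otimes l})$ biject with the $A_n$ orbits on $[n]^{l+k}$ by the same matrix-unit argument used for $S_n$. Your extra paragraph spelling out why that argument transfers from $S_n$ to $A_n$ is a sound (if slightly more explicit) rendering of what the paper leaves implicit.
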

\begin{theorem} \label{Andimension}
	For $k, l \in \mathbb{Z}_{\geq 0}, n \in \mathbb{Z}_{\geq 1}$, 
	the dimension of 
	$
	\Hom_{A_n}((\mathbb{R}^{n})^{\otimes k}, (\mathbb{R}^{n})^{\otimes l})
	$
is equal to
\begin{equation}
	\sum_{t=1}^{n-2} 
	\begin{Bmatrix}
		l+k\\
		t 
	\end{Bmatrix}
	+
	2
	\begin{Bmatrix}
		l+k\\
		n-1
	\end{Bmatrix}
	+
	2
	\begin{Bmatrix}
		l+k\\
		n 
	\end{Bmatrix}
\end{equation}
\end{theorem}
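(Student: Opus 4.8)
The plan is to reduce the dimension count to a count of $A_n$ orbits on $[n]^{l+k}$ and then read off how each $S_n$ orbit contributes, using Theorem~\ref{SnOrbitSplit}. As noted at the start of this section, exactly as in the symmetric group case (Theorem~\ref{Sncorresp}), the basis elements of $\Hom_{A_n}((\mathbb{R}^{n})^{\otimes k}, (\mathbb{R}^{n})^{\otimes l})$ correspond bijectively with the orbits of the $A_n$-action on $[n]^{l+k}$. Hence the dimension in question is precisely the total number of such $A_n$ orbits, and the task becomes one of counting them.

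First I would stratify the $S_n$ orbits by the number of blocks of their associated set partition. By Theorem~\ref{Sncorresp}, the $S_n$ orbits on $[n]^{l+k}$ are in bijection with the set partitions $\pi \in \Pi_{l+k,n}$, and the number of these having exactly $t$ blocks is the Stirling number $\begin{Bmatrix} l+k \\ t \end{Bmatrix}$ of the second kind, for each $1 \leq t \leq n$. Since $A_n$ is a subgroup of $S_n$, every $A_n$ orbit is contained in exactly one $S_n$ orbit; that is, the $A_n$ orbits refine the $S_n$ orbits. Consequently the total number of $A_n$ orbits can be computed stratum by stratum, by summing over all $S_n$ orbits the number of $A_n$ orbits into which each one decomposes, with no risk of omission or double counting.

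Next I would apply Theorem~\ref{SnOrbitSplit} to determine each contribution. An $S_n$ orbit whose set partition has $t \leq n-2$ blocks does not split, and so contributes a single $A_n$ orbit; an $S_n$ orbit whose set partition has $t = n-1$ or $t = n$ blocks splits into a disjoint union of exactly two $A_n$ orbits, and so contributes two. Multiplying each stratum count $\begin{Bmatrix} l+k \\ t \end{Bmatrix}$ by the appropriate multiplicity, namely $1$ for $t \leq n-2$ and $2$ for $t \in \{n-1, n\}$, and then summing yields
\[
	\sum_{t=1}^{n-2} \begin{Bmatrix} l+k \\ t \end{Bmatrix} + 2\begin{Bmatrix} l+k \\ n-1 \end{Bmatrix} + 2\begin{Bmatrix} l+k \\ n \end{Bmatrix},
\]
which is exactly the claimed expression.

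Since both Theorem~\ref{SnOrbitSplit} and Theorem~\ref{Sncorresp} are already established, there is no genuine obstacle here: the argument is a bookkeeping exercise that partitions the orbit count by block number. The only point requiring a moment's care is the observation that the $A_n$ orbits refine the $S_n$ orbits, since this is what licenses computing the total count one stratum at a time; everything else follows by substituting the split multiplicities from Theorem~\ref{SnOrbitSplit} into the stratified sum.
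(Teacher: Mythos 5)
Your proposal is correct and follows essentially the same route as the paper, which derives Theorem~\ref{Andimension} as an immediate consequence of Theorem~\ref{SnOrbitSplit} together with the bijection between basis elements of $\Hom_{A_n}((\mathbb{R}^{n})^{\otimes k}, (\mathbb{R}^{n})^{\otimes l})$ and the $A_n$ orbits on $[n]^{l+k}$, counting the $S_n$ orbits with $t$ blocks by the Stirling number $\begin{Bmatrix} l+k \\ t \end{Bmatrix}$ and weighting by the split multiplicity. You have simply made explicit the bookkeeping that the paper leaves implicit.
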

In other words, Theorem \ref{AnSetPart} tells us that finding a basis of
	$
	\Hom_{A_n}((\mathbb{R}^{n})^{\otimes k}, (\mathbb{R}^{n})^{\otimes l})
	$
in the standard basis of $\mathbb{R}^{n}$ is very similar to finding a basis of 
	$
	\Hom_{S_n}((\mathbb{R}^{n})^{\otimes k}, (\mathbb{R}^{n})^{\otimes l})
	$
in the standard basis of $\mathbb{R}^{n}$, since finding the basis amounts once again to considering all of the set partitions of $[l+k]$ having at most $n$ blocks.

Indeed, if a set partition $\pi$ in $\Pi_{l+k,n}$ has at most $n-2$ blocks, then we see that $X_\pi$ given in (\ref{equivbasiselement}) is a basis element of 
	$
	\Hom_{A_n}((\mathbb{R}^{n})^{\otimes k}, (\mathbb{R}^{n})^{\otimes l})
	$
since, in this case, $O_{S_n}((I_\pi, J_\pi)) = O_{A_n}((I_\pi, J_\pi))$, 
by Theorem \ref{SnOrbitSplit}.

The question remains as to how to take a set partition $\pi$ in $\Pi_{l+k,n}$ having either $n-1$ or $n$ blocks and use it to obtain the two basis elements of 
	$
	\Hom_{A_n}((\mathbb{R}^{n})^{\otimes k}, (\mathbb{R}^{n})^{\otimes l})
	$ 
that it corresponds to.
Said differently, we 
would like to
take such a set partition $\pi$ and identify the two $A_n$ orbits that its 
corresponding
$S_n$ orbit on $[n]^{l+k}$,
$O_{S_n}((I_\pi, J_\pi))$, splits into.

\subsection{Enter: the Jellyfish}


Comes~\yrcite{comes} originally came up with the idea of using \textit{jellyfish} to identify the two $A_n$ orbits; here, however, our choice of exposition is rather different and our proofs are simpler than those given in their paper.

We begin by defining the following map.

\begin{definition}
For $n \in \mathbb{Z}_{\geq 1}$, 
	we define the determinant map
\begin{equation} \label{detmap}
	\det : (\mathbb{R}^{n})^{\otimes n} \rightarrow \mathbb{R}
\end{equation}
	on the standard basis of $(\mathbb{R}^{n})^{\otimes n}$ by
\begin{equation} \label{detmapdefn}
	e_{I} = e_{i_1} \otimes \dots \otimes e_{i_n}
	\mapsto
	\begin{vmatrix}
		e_{i_1} & \dots & e_{i_n}
	\end{vmatrix}
\end{equation}
and extend linearly, 
	where the RHS of (\ref{detmapdefn}) is the determinant of an $n \times n$ matrix.
\end{definition}

\begin{lemma}
The determinant map is an element of 
	$\Hom_{A_n}((\mathbb{R}^{n})^{\otimes n}, (\mathbb{R}^{n})^{\otimes 0})$, 
but it is not an element of
	$\Hom_{S_n}((\mathbb{R}^{n})^{\otimes n}, (\mathbb{R}^{n})^{\otimes 0})$.
\end{lemma}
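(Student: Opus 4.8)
The plan is to exploit that the codomain $(\mathbb{R}^{n})^{\otimes 0} = \mathbb{R}$ carries the trivial representation $\rho_0$, so that, by Definition \ref{Gequivariance}, membership of $\det$ in $\Hom_{G}((\mathbb{R}^{n})^{\otimes n}, (\mathbb{R}^{n})^{\otimes 0})$ for $G \in \{A_n, S_n\}$ is equivalent to the $G$-\emph{invariance} condition $\det(\rho_n(g)[v]) = \det(v)$ for all $g \in G$ and $v \in (\mathbb{R}^{n})^{\otimes n}$. Since $\det$ is linear and the $e_I$ form a basis of $(\mathbb{R}^{n})^{\otimes n}$, it suffices to verify, or refute, this identity on each basis element $e_I$.

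The key computation is to track how $\det$ responds to the group action on a basis element. First I would recall that $\rho_n(g)[e_I] = e_{g(I)} = e_{g(i_1)} \otimes \dots \otimes e_{g(i_n)}$, so that $\det(\rho_n(g)[e_I])$ is the determinant of the $n \times n$ matrix whose $j^{\text{th}}$ column is $e_{g(i_j)}$. The crucial observation is that this matrix is obtained from the matrix with columns $e_{i_1}, \dots, e_{i_n}$ by left-multiplying by the permutation matrix $P_g$ associated to $g$, since $P_g e_{i_j} = e_{g(i_j)}$. By multiplicativity of the determinant together with $\det(P_g) = \sgn(g)$, this yields the identity
\begin{equation}
	\det(\rho_n(g)[e_I]) = \sgn(g)\,\det(e_I)
\end{equation}
for every $g \in S_n$ and every $I \in [n]^n$.

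With this identity in hand, both claims follow immediately. For $g \in A_n$ we have $\sgn(g) = +1$, so $\det(\rho_n(g)[e_I]) = \det(e_I)$ on every basis element, hence by linearity $\det$ is $A_n$-invariant, that is, $A_n$-equivariant. To refute $S_n$-equivariance (assuming $n \geq 2$, so that an odd permutation exists), I would exhibit a single counterexample: taking $I = (1, 2, \dots, n)$ gives $\det(e_I) = 1$, while for any transposition $g$ we obtain $\det(\rho_n(g)[e_I]) = \sgn(g)\det(e_I) = -1 \neq \det(e_I)$, violating the invariance condition. The only real obstacle is the bookkeeping in the key computation, namely correctly identifying that permuting the tensor factors corresponds to left multiplication by $P_g$ and that the determinant therefore picks up exactly the factor $\sgn(g)$; once this is established, the rest is routine.
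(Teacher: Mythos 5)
Your proof is correct and follows essentially the same route as the paper: both establish the identity $\det(\rho_n(\sigma)[e_I]) = \sgn(\sigma)\det(e_I)$ on basis elements and then read off $A_n$-invariance from $\sgn(\sigma)=+1$ and non-$S_n$-invariance from any odd permutation. If anything, your justification of the sign factor (left multiplication by the permutation matrix $P_g$ plus multiplicativity of the determinant) is more precise than the paper's brief appeal to column swaps, and your explicit caveat that $n \geq 2$ is needed for the negative half is a small point the paper leaves implicit.
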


\begin{proof}
	It is clear that the determinant map is a linear map.

	Then, for any $\sigma \in S_n$, we have that
	\begin{equation} \label{detSnequiv}
		\det(\rho_n(\sigma)[e_{I}]) 
		= 
		(-1)^{\sgn(\sigma)} \det(e_{I})
	\end{equation}
	since a transposition in $S_n$ corresponds to swapping two columns of the $n \times n$ matrix.

	As $\sgn(\sigma) = 1$ for all $\sigma \in A_n$, 
	(\ref{detSnequiv}) shows that
$\det$ is an element of 
	$\Hom_{A_n}((\mathbb{R}^{n})^{\otimes n}, (\mathbb{R}^{n})^{\otimes 0})$.
	It is enough to see that for any odd permutation in $S_n$, (\ref{detSnequiv}) implies that
$\det$ is not an element of 
	$\Hom_{S_n}((\mathbb{R}^{n})^{\otimes n}, (\mathbb{R}^{n})^{\otimes 0})$.
\end{proof}

For each $n \in \mathbb{Z}_{\geq 1}$, 
we represent the determinant map by a diagram that has a single row of $n$ vertices, each of which is attached to a blue head.
We will call this diagram a \textit{jellyfish}, for obvious reasons.

For example, if $n = 3$, the jellyfish has the form
\begin{center}
	\scalebox{0.5}{\tikzfig{jellyfishex}}
\end{center}
\begin{remark}
	It is important to highlight that the determinant map (\ref{detmap}) is different from the determinant operator on an $n \times n$ matrix.
	In particular, the determinant map is a linear map on 
	$(\mathbb{R}^{n})^{\otimes n}$
	whereas the determinant operator on $n \times n$ matrices is not linear.
\end{remark}

We introduce the following useful lemma.

\begin{lemma} \label{detbasis}
	The determinant map applied to a standard basis vector of 
	$(\mathbb{R}^{n})^{\otimes n}$
	gives either $-1, 0$ or $+1$.
\end{lemma}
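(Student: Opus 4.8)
The plan is to examine directly the $n \times n$ matrix $M_I$ whose columns are the standard basis vectors $e_{i_1}, \dots, e_{i_n}$ of $\mathbb{R}^{n}$, since $\det(e_I)$ is by definition $\det(M_I)$, and to split into two cases according to whether the multi-index $I = (i_1, \dots, i_n) \in [n]^n$ has a repeated entry.

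First I would dispatch the degenerate case. Suppose some entry of $I$ is repeated, say $i_p = i_q$ with $p \neq q$. Then columns $p$ and $q$ of $M_I$ are the identical standard basis vector, so $\det(M_I) = 0$, and hence $\det(e_I) = 0$. Next I would treat the case where the entries $i_1, \dots, i_n$ are pairwise distinct. The key observation here is that distinctness, together with $|I| = n = |[n]|$, forces $i_1, \dots, i_n$ to be a permutation of $[n]$; that is, there is a unique $\tau \in S_n$ with $\tau(j) = i_j$ for all $j \in [n]$. Consequently $M_I$ is precisely the permutation matrix associated to $\tau$, whose determinant is $\sgn(\tau) \in \{+1, -1\}$.

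For the last step I have two equivalent routes, and I would take the second as it reuses work already in the excerpt. One option is to cite the standard fact that the determinant of a permutation matrix equals the sign of the permutation. Alternatively, writing $e_{(1,\dots,n)} = e_1 \otimes \dots \otimes e_n$, whose associated matrix is the identity with $\det(e_{(1,\dots,n)}) = 1$, I would note that $e_I = \rho_n(\tau)[e_{(1,\dots,n)}]$ and apply the scaling relation (\ref{detSnequiv}) established in the preceding lemma to conclude $\det(e_I) = \sgn(\tau)\det(e_{(1,\dots,n)}) = \sgn(\tau) \in \{+1,-1\}$. Combining the two cases gives that the image is always $-1$, $0$, or $+1$, as claimed. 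There is essentially no serious obstacle here; the only point needing care is the counting argument that pairwise distinctness of the $n$ indices drawn from the $n$-element set $[n]$ makes $I$ a full permutation, after which the statement reduces entirely to the determinant of a permutation matrix.
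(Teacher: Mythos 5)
Your proposal is correct and follows essentially the same route as the paper: the same case split on whether the multi-index has a repeated entry (giving determinant $0$), and in the distinct case writing $e_I = \rho_n(\tau)[e_{(1,\dots,n)}]$ and invoking the scaling relation (\ref{detSnequiv}) to conclude the value is $\pm 1$. No gaps; the counting observation that $n$ pairwise distinct entries from $[n]$ form a permutation is exactly the step the paper also relies on.
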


\begin{proof}
	Let
	$e_{I} = e_{i_1} \otimes \dots \otimes e_{i_n}$
	be a standard basis vector of 
	$(\mathbb{R}^{n})^{\otimes n}$.

	If $i_x = i_y$ for some $1 \leq x \neq y \leq n$, then $\det(e_I) = 0$ since two columns in the $n \times n$ matrix will be the same.
	Otherwise, $\{i_1, \dots, i_n\}$ is some permutation of $[n]$.
	Defining
	$e_{[n]} \coloneqq e_{1} \otimes \dots \otimes e_{n}$, we know that
	$\det(e_{[n]}) = +1$.
	Since, in this case, $e_{I} = \rho_n(\sigma)e_{[n]}$ for some $\sigma \in S_n$, 
	 we can use (\ref{detSnequiv}) to calculate 
	$\det(e_{I})$, giving a result of $\pm 1$.
\end{proof}

\begin{remark} \label{remsplit}
We can view
Lemma \ref{detbasis} in another way; namely, that the determinant map 
	splits basis vectors of
$(\mathbb{R}^{n})^{\otimes n}$ 
into disjoint classes.
Let us call these classes $-1, 0$ and $+1$.
\end{remark}

Consequently, Remark \ref{remsplit}
suggests that the determinant map is a possible candidate function
for identifying the $A_n$ orbits that the $S_n$ orbit
$O_{S_n}((I_\pi, J_\pi))$ splits into, where 
$\pi$ is a set partition in $\Pi_{l+k,n}$ having either $n-1$ or $n$ blocks.




However, since the determinant map is a map
	$(\mathbb{R}^{n})^{\otimes n} \rightarrow \mathbb{R}$,
	and the elements of $O_{S_n}((I_\pi, J_\pi))$ are elements of $[n]^{l+k}$,
	to use the determinant map to try to identify the $A_n$ orbits in $O_{S_n}((I_\pi, J_\pi))$,
	it would be useful to create a map
	$g_\pi : (\mathbb{R}^{n})^{\otimes l+k} \rightarrow (\mathbb{R}^{n})^{\otimes n}$
that corresponds bijectively with the set partition $\pi$,
since such a map would project standard basis elements of $(\mathbb{R}^{n})^{\otimes l+k}$ onto, at the very least, a linear combination of basis elements of $(\mathbb{R}^{n})^{\otimes n}$.
We could then use the splitting property of the determinant map on such linear combinations to try to split the elements of $O_{S_n}((I_\pi, J_\pi))$ into different classes.
However, in order to use the three classes given in Remark \ref{remsplit}, we cannot choose any map $(\mathbb{R}^{n})^{\otimes l+k} \rightarrow (\mathbb{R}^{n})^{\otimes n}$.
We will see that with a clever choice of $g_\pi$, the determinant map then splits the elements of $O_{S_n}((I_\pi, J_\pi))$ into the $\pm 1$ classes and sends all other elements of $[n]^{l+k}$ to the $0$ class.





We create $g_\pi$ as follows.
Flatten the diagram $d_{\pi}$ that corresponds to $\pi$.
Add a new top row of $n$ vertices above the row of $l+k$ vertices.
From the bottom row, take the lowest numbered vertex in block $i$ of $\pi$, $1 \leq i \leq t$, where $t = n - 1$ or $n$, and connect that vertex to vertex $i$ in the top row.
Call this new diagram $b_\pi$.


We claim the following:

\begin{proposition} \label{gpiSnequiv}
	The diagram $b_\pi$ corresponds bijectively with a basis element of
	$\Hom_{S_n}((\mathbb{R}^{n})^{\otimes l+k}, (\mathbb{R}^{n})^{\otimes n})$.
	Consequently, we define $g_\pi$ to be this basis element.
\end{proposition}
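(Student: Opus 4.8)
The plan is to reduce the claim to the bijection of Theorem~\ref{Sncorresp}, now applied to the space $\Hom_{S_n}((\mathbb{R}^{n})^{\otimes l+k}, (\mathbb{R}^{n})^{\otimes n})$. Reading that theorem with the codomain tensor power $n$ playing the role of the top row and the domain tensor power $l+k$ playing the role of the bottom row, its basis elements correspond bijectively to the set partitions of $[n+l+k]$ having at most $n$ blocks. Hence it suffices to check that $b_\pi$ is a legitimate set partition diagram on $n+(l+k)$ vertices whose connected components form a set partition with at most $n$ blocks; the associated basis element $g_\pi$ is then the matrix produced from that set partition by~(\ref{equivbasiselement}).

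First I would confirm the vertex counts: $b_\pi$ has a top row of $n$ vertices and a bottom row of $l+k$ vertices, matching the tensor powers $n$ and $l+k$ respectively. The edges among the bottom vertices are exactly those inherited from the flattened diagram of $\pi$, so they cut the bottom row into the $t$ blocks of $\pi$, where $t \in \{n-1, n\}$. The only further edges join, for each $1 \leq i \leq t$, the lowest-numbered bottom vertex of block $i$ to the $i$-th top vertex. Consequently the connected components of $b_\pi$ are well defined and genuinely partition all $n+l+k$ vertices, so $b_\pi$ is indeed a set partition diagram of the required shape.

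The heart of the argument is the block count. Each of the $t$ blocks of $\pi$ acquires exactly one top vertex, and because block $i$ is attached to top vertex $i$ this assignment uses the top vertices $1, \dots, t$ bijectively; no block receives two top vertices and no top vertex is shared by two blocks, so these contribute precisely $t$ connected components. The remaining $n-t$ top vertices receive no edge and are therefore singleton components. Since $t$ equals $n-1$ or $n$, the total is $t+(n-t)=n$ components, which is at most $n$, as needed. Applying Theorem~\ref{Sncorresp} then yields a \emph{unique} basis element of $\Hom_{S_n}((\mathbb{R}^{n})^{\otimes l+k}, (\mathbb{R}^{n})^{\otimes n})$, and we define $g_\pi$ to be it.

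I do not expect a genuine obstacle, since the whole proof is a bookkeeping verification once the diagrammatic conventions are fixed. The only step requiring care is the block count: one must check that attaching top vertices to the existing blocks produces exactly $n$ connected components rather than more. This holds because each top vertex is joined to at most one bottom block and each block to exactly one top vertex, so the $t$ blocks plus the $n-t$ leftover top vertices give $n$ components in total, and $n \leq n$ certifies that the resulting set partition lies in the range of the bijection of Theorem~\ref{Sncorresp}. It is worth recording that the count is exactly $n$, the maximal permitted value, since this is precisely the feature that later allows the determinant map on $(\mathbb{R}^{n})^{\otimes n}$ to act nontrivially on the image of $g_\pi$.
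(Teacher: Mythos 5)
Your proof is correct and follows essentially the same route as the paper's: verify that $b_\pi$ is a set partition diagram of $[n+l+k]$ with exactly $n$ blocks (the $t$ blocks of $\pi$ each absorbing one top vertex plus the $n-t$ leftover singleton top vertices), hence lies in $\Pi_{n+l+k,n}$, and then invoke the bijection of Theorem~\ref{equivbasislplusk}. The only difference is that you spell out the block-count bookkeeping that the paper dismisses as ``clear.''
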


\begin{proof}
	It is clear that the diagram $b_\pi$ corresponds to a set partition of $[n+l+k]$ having exactly $n$ blocks. 
	Hence this set partition is an element of $\Pi_{n+l+k,n}$.
	Since each set partition in $\Pi_{n+l+k,n}$ corresponds bijectively with a basis element of  
	$\Hom_{S_n}((\mathbb{R}^{n})^{\otimes l+k}, (\mathbb{R}^{n})^{\otimes n})$,
	by Theorem \ref{equivbasislplusk}, we obtain the result.
\end{proof}

The following result is immediate from the construction of the diagram $b_\pi$.
\begin{proposition}
	In the standard basis of matrix units of
	$\Hom((\mathbb{R}^{n})^{\otimes l+k}, (\mathbb{R}^{n})^{\otimes n})$,
	$g_\pi$ is given by
\begin{equation} \label{gpiform}
	g_\pi \coloneqq 
	\sum_{(K,I,J) \in O_{S_n}((K_{\pi},I_{\pi},J_{\pi}))} E_{K,(I,J)}
\end{equation}
where
\begin{equation} 
	K_{\pi} \coloneqq (1, 2, \dots, n)
\end{equation}
and $O_{S_n}((K_{\pi},I_{\pi},J_{\pi}))$ is defined to be
\begin{equation} \label{OKpiIpiJpi}
	\left\{ (K,I,J) \; \middle\vert 
	\begin{array}{l}
		(I,J) \in O_{S_n}((I_{\pi},J_{\pi})) \\
		\text{and if } (I,J) = \sigma(I_{\pi},J_{\pi}), \\
		\text{then } K \coloneqq \sigma(K_{\pi})
	\end{array}
	\right\}
\end{equation}
\end{proposition}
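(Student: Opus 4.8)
The plan is to apply the general recipe for $S_n$-equivariant basis elements, given in (\ref{equivbasiselement}) and Theorem \ref{equivbasislplusk}, to the particular set partition of $[n+l+k]$ that the diagram $b_\pi$ represents. By Proposition \ref{gpiSnequiv}, $g_\pi$ is precisely the basis element attached to $b_\pi$, so $g_\pi = \sum E_{K,(I,J)}$, where the sum runs over the $S_n$-orbit on $[n]^{n+l+k}$ of the block labelling of $b_\pi$. Everything therefore reduces to computing this block labelling and then unpacking the associated orbit.

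First I would determine the block labelling of $b_\pi$ explicitly. The $n+l+k$ vertices split into the new top row $1,\dots,n$ and the flattened bottom row $n+1,\dots,n+l+k$. Running the iterative block-labelling procedure from Section \ref{setpartitions}, the top vertices carry the smallest indices and, by construction of $b_\pi$, lie in pairwise distinct blocks: top vertex $i$ is joined to the lowest vertex of block $i$ of $\pi$ for $i \leq t$, and top vertex $n$ forms a singleton when $t = n-1$. Hence the block containing top vertex $i$ receives label $i$, so the top portion of the labelling is $K_\pi = (1,2,\dots,n)$. Since top vertex $i$ is joined to exactly block $i$ of $\pi$, each bottom vertex inherits precisely the label that its block carries in the block labelling of $\pi$, so the bottom portion is $(I_\pi,J_\pi)$. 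This identifies the block labelling of $b_\pi$ as $(K_\pi,I_\pi,J_\pi)$ and establishes (\ref{gpiform}).

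It then remains to check that $O_{S_n}((K_\pi,I_\pi,J_\pi))$ is the set described in (\ref{OKpiIpiJpi}). As $S_n$ acts diagonally, this orbit equals $\{(\sigma(K_\pi),\sigma(I_\pi),\sigma(J_\pi)) \mid \sigma \in S_n\}$, whose projection onto the $(I,J)$-coordinates is exactly $O_{S_n}((I_\pi,J_\pi))$; the only content of (\ref{OKpiIpiJpi}) is thus that each such $(I,J)$ pins down its companion $K$ via the rule $K = \sigma(K_\pi)$ whenever $(I,J) = \sigma(I_\pi,J_\pi)$. I expect the well-definedness of this rule to be the one point genuinely requiring an argument: if $\sigma$ and $\sigma'$ both carry $(I_\pi,J_\pi)$ to $(I,J)$, then $\sigma^{-1}\sigma' \in \Stab_{S_n}((I_\pi,J_\pi)) \cong S_{n-t}$, and here I would invoke the Orbit--Stabilizer analysis from the proof of Theorem \ref{SnOrbitSplit}: because $\pi$ has $t = n-1$ or $n$ blocks, this stabiliser is trivial, so $\sigma$ is uniquely determined by $(I,J)$ and $K$ is unambiguous. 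With well-definedness established, the two sets coincide and the claimed form of $g_\pi$ follows.
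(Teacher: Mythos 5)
Your proposal is correct and follows the same route the paper intends: the paper simply declares the result ``immediate from the construction of the diagram $b_\pi$,'' and your argument is exactly that verification written out --- computing the block labelling of $b_\pi$ as $(K_\pi, I_\pi, J_\pi)$ and unwinding the orbit in (\ref{equivbasiselement}). Your extra observation that the rule $K \coloneqq \sigma(K_\pi)$ is well defined because $\Stab_{S_n}((I_\pi,J_\pi))$ is trivial when $t = n-1$ or $n$ is a worthwhile detail that the paper leaves implicit.
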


In order to see that $g_\pi$ has the properties that we need, we first define the following map.
\begin{definition}
	Let $\pi$ be a set partition in $\Pi_{l+k,n}$ having either $n-1$ or $n$ blocks.
	Then we define $f_\pi \coloneqq \det \circ \, g_\pi : (\mathbb{R}^{n})^{\otimes l+k} \rightarrow \mathbb{R}$.
	Clearly, $f_\pi$ corresponds bijectively with the set partition $\pi$.
\end{definition}

We can associate to $f_\pi$ a diagram that is built from the composition of $b_\pi$ (which corresponds to the map $g_\pi$) and a jellyfish (which corresponds to the determinant map).

For example, the diagram that is associated to $f_\pi$ for the set partition $\pi$ whose diagram $d_\pi$ is given in Figure \ref{partklelement1},
where we choose $l=3$, $k=5$ and $n=5$, is
\begin{center}
	\scalebox{0.5}{\tikzfig{jellyfishattached}}
\end{center}

\begin{proposition} \label{fpiAnequiv}
	The map $f_\pi$ is an element of
	$\Hom_{A_n}((\mathbb{R}^{n})^{\otimes l+k}, (\mathbb{R}^{n})^{\otimes 0})$.
\end{proposition}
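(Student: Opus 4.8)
The plan is to show that $f_\pi = \det \circ\, g_\pi$ is $A_n$-equivariant by exhibiting it as a composition of two maps that are each $A_n$-equivariant, and then appealing to the fact (noted in the paper's first remark) that a composition of $G$-equivariant maps is $G$-equivariant. The two constituent maps are already understood: by Proposition \ref{gpiSnequiv} the map $g_\pi$ is a basis element of $\Hom_{S_n}((\mathbb{R}^{n})^{\otimes l+k}, (\mathbb{R}^{n})^{\otimes n})$, so in particular $g_\pi$ is $S_n$-equivariant, and restricting the action to $A_n \leq S_n$ shows $g_\pi$ is $A_n$-equivariant; meanwhile the preceding lemma established that $\det$ lies in $\Hom_{A_n}((\mathbb{R}^{n})^{\otimes n}, (\mathbb{R}^{n})^{\otimes 0})$.

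First I would recall the definition of $A_n$-equivariance from Definition \ref{Gequivariance}: I must verify that for every $\sigma \in A_n$ and every $v \in (\mathbb{R}^{n})^{\otimes l+k}$,
\begin{equation}
	f_\pi(\rho_{l+k}(\sigma)[v]) = \rho_0(\sigma)[f_\pi(v)],
\end{equation}
where $\rho_0$ is the action on $(\mathbb{R}^{n})^{\otimes 0} = \mathbb{R}$, which is the trivial representation. Next I would chase the equivariance through the composition. Since $g_\pi$ is $S_n$-equivariant and $\sigma \in A_n \subseteq S_n$, we have $g_\pi(\rho_{l+k}(\sigma)[v]) = \rho_n(\sigma)[g_\pi(v)]$. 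Applying $\det$ to both sides and using that $\det \in \Hom_{A_n}((\mathbb{R}^{n})^{\otimes n}, \mathbb{R})$, together with $\sigma \in A_n$, gives $\det(\rho_n(\sigma)[g_\pi(v)]) = \rho_0(\sigma)[\det(g_\pi(v))] = f_\pi(v)$, where the final equality uses that $\rho_0$ is trivial. Composing these two identities yields exactly the required equation.

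The only subtlety, rather than a genuine obstacle, is bookkeeping the restriction of scalars: $g_\pi$ was constructed as an $S_n$-equivariant map, and I must note explicitly that restricting the homomorphism $\rho_{l+k} : S_n \to GL$ to $A_n$ keeps $g_\pi$ equivariant, since equivariance over the larger group $S_n$ trivially implies equivariance over the subgroup $A_n$. The paper has already set this up by using the same symbol $\rho_k$ for both the $S_n$-representation and its restriction to $A_n$, so this step is immediate. I expect the proof to be short: the content is entirely carried by the two earlier results (Proposition \ref{gpiSnequiv} and the determinant lemma), and the remaining work is the routine verification that composition preserves equivariance along the chain $A_n \leq S_n$.
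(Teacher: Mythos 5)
Your proposal is correct and takes essentially the same route as the paper: the paper's proof likewise chases equivariance through the composition $f_\pi = \det \circ\, g_\pi$, invoking Proposition \ref{gpiSnequiv} for the $S_n$- (hence $A_n$-) equivariance of $g_\pi$ and the invariance of $\det$ under $A_n$ from (\ref{detSnequiv}). The only cosmetic difference is that the paper verifies the identity on standard basis vectors $e_{(I,J)}$ and extends linearly, whereas you phrase it for arbitrary $v$ and make the restriction from $S_n$ to $A_n$ explicit.
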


\begin{proof}
	Let $e_{(I,J)}$ be any standard basis vector in $(\mathbb{R}^{n})^{\otimes l+k}$, where $I \in [n]^l$ and $J \in [n]^k$.

	Then, for all $\sigma \in A_n$, we have that
	\begin{align}
		f_\pi(\rho_{l+k}(\sigma)[e_{(I,J)}])
		& =
		\det \circ \, g_\pi 
		(\rho_{l+k}(\sigma)[e_{(I,J)}]) \\
		& =
		\det(\rho_n(\sigma)[g_\pi(e_{(I,J)})]) \label{gpiequiv} \\
		& =
		\det(g_\pi(e_{(I,J)})) \label{gpidet} \\
		& =
		f_\pi(e_{(I,J)})
	\end{align}
	where, in (\ref{gpiequiv}), we have used Proposition \ref{gpiSnequiv}
	and in (\ref{gpidet}), we have used (\ref{detSnequiv}).
\end{proof}

We come to the crucial result for our purposes.
\begin{theorem} \label{fpimaps}
	Let $e_{(I,J)}$ be any standard basis vector in $(\mathbb{R}^{n})^{\otimes l+k}$, where $I \in [n]^l, J \in [n]^k$.

	Then
	\begin{equation}
		f_\pi : e_{(I,J)} \rightarrow
		\begin{cases}
			\pm 1 & \text{if $(I,J) \in O_{S_n}((I_\pi, J_\pi))$} \\
			0     & \text{otherwise}
		\end{cases}
	\end{equation}
\end{theorem}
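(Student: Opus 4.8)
The plan is to evaluate $f_\pi(e_{(I,J)}) = \det(g_\pi(e_{(I,J)}))$ directly from the matrix-unit form (\ref{gpiform}) of $g_\pi$, treating the two cases of the statement separately. I would first record how $g_\pi$ acts on a single basis vector: since $E_{K,(I',J')}$ sends $e_{(I,J)}$ to $\delta_{(I',J'),(I,J)}\,e_K$, formula (\ref{gpiform}) gives
\[
	g_\pi(e_{(I,J)}) = \sum_{K \,:\, (K,I,J) \in O_{S_n}((K_\pi,I_\pi,J_\pi))} e_K .
\]
By the description (\ref{OKpiIpiJpi}), a triple $(K,I,J)$ can belong to $O_{S_n}((K_\pi,I_\pi,J_\pi))$ only when $(I,J) \in O_{S_n}((I_\pi,J_\pi))$. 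Hence if $(I,J) \notin O_{S_n}((I_\pi,J_\pi))$ the sum is empty, so $g_\pi(e_{(I,J)}) = 0$ and $f_\pi(e_{(I,J)}) = \det(0) = 0$; this settles the ``otherwise'' case.

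For the case $(I,J) \in O_{S_n}((I_\pi,J_\pi))$, the crux is to show that the index set of the sum above is a singleton. I would prove that $\Stab_{S_n}((I_\pi,J_\pi))$ is trivial whenever $\pi$ has $t = n-1$ or $t = n$ blocks: the block labelling makes each of the labels $1,\dots,t$ occur among the entries of $(I_\pi,J_\pi)$, so any stabilising permutation must fix all of $1,\dots,t$; for $t=n$ this is already the identity, and for $t = n-1$ the sole remaining point $n$ is then forced to be fixed as well, again giving the identity. Consequently there is a unique $\sigma \in S_n$ with $\sigma(I_\pi,J_\pi) = (I,J)$, and by (\ref{OKpiIpiJpi}) the unique admissible $K$ is $\sigma(K_\pi)$, so that $g_\pi(e_{(I,J)}) = e_{\sigma(K_\pi)}$ is a single standard basis vector.

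It then suffices to compute $\det(e_{\sigma(K_\pi)})$. Since $K_\pi = (1,2,\dots,n)$ and $\sigma$ is a bijection of $[n]$, the tuple $\sigma(K_\pi) = (\sigma(1),\dots,\sigma(n))$ has pairwise distinct entries, i.e. it is a reordering of $[n]$; Lemma \ref{detbasis} then rules out the value $0$ and forces $\det(e_{\sigma(K_\pi)}) = \pm 1$, giving $f_\pi(e_{(I,J)}) = \pm 1$ as required.

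The step I expect to be the main obstacle is the triviality of $\Stab_{S_n}((I_\pi,J_\pi))$ for $t \in \{n-1,n\}$: this is exactly what guarantees that $g_\pi(e_{(I,J)})$ is a single basis vector rather than a sum of several, which is what prevents any cancellation and pins the value down to the nonzero $\pm 1$. It is the same stabiliser computation that drives the splitting in Theorem \ref{SnOrbitSplit}, so reusing that analysis is the natural route. One should also note, looking ahead, that the resulting sign is $\sgn(\sigma)$, which varies over the orbit — precisely the feature that will let $f_\pi$ separate the two $A_n$ orbits inside $O_{S_n}((I_\pi,J_\pi))$.
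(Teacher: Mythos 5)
Your proposal is correct and follows essentially the same route as the paper's proof: compute $g_\pi(e_{(I,J)})$ from the matrix-unit form (\ref{gpiform}), obtain zero in the ``otherwise'' case, and reduce the orbit case to Lemma \ref{detbasis} applied to $e_{\sigma(K_\pi)}$. The one place you go beyond the paper is in explicitly verifying that $\Stab_{S_n}((I_\pi,J_\pi))$ is trivial for $t\in\{n-1,n\}$, so that the sum defining $g_\pi(e_{(I,J)})$ collapses to a single standard basis vector; the paper leaves this implicit in the definition (\ref{OKpiIpiJpi}), and your check is a worthwhile clarification rather than a different argument.
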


\begin{proof}
	For any standard basis vector $e_{(I,J)}$ in 
	$(\mathbb{R}^{n})^{\otimes l+k}$, 
	we have, on a matrix unit of
	$\Hom((\mathbb{R}^{n})^{\otimes l+k}, (\mathbb{R}^{n})^{\otimes n})$, that
	\begin{equation} \label{matrixunitmult}
		E_{K, (L,M)}e_{(I,J)}
		=
		\delta_{((L,M),(I,J))}e_{K}
	\end{equation}
	where $e_{K}$ is a standard basis vector in 
	$(\mathbb{R}^{n})^{\otimes n}$.
	
	Hence, by (\ref{gpiform}), we see that,
	for the otherwise case,
	$g_\pi(e_{(I,J)}) = 0$, and so 
	$f_\pi(e_{(I,J)}) = 0$. 

	If $(I,J) \in O_{S_n}((I_\pi, J_\pi))$, then we have that 
	$(I,J) = \sigma(I_{\pi},J_{\pi})$ for some $\sigma \in S_n$. 

	Hence, by (\ref{gpiform}), (\ref{matrixunitmult}) and
	the definition of
	$O_{S_n}((K_{\pi},I_{\pi},J_{\pi}))$
	given in (\ref{OKpiIpiJpi}), 
	we see that 
	$g_\pi(e_{(I,J)}) = e_{K}$,
	where	
	$K = \sigma(K_{\pi})$.

	Since $K$ is a permutation of $[n]$, we can apply Lemma \ref{detbasis}
	to get that
	$f_\pi(e_{(I,J)}) = \det(e_{K}) = \pm 1$. 
\end{proof}

Consequently, we can define the following two sets:
\begin{equation} \label{Opiplus}
	O_\pi^{+} 
	\coloneqq
	\{(I,J) \in O_{S_n}((I_\pi, J_\pi)) \mid f_\pi(e_{(I,J)}) = +1\}
\end{equation}
and
\begin{equation} \label{Opiminus}
	O_\pi^{-} 
	\coloneqq
	\{(I,J) \in O_{S_n}((I_\pi, J_\pi)) \mid f_\pi(e_{(I,J)}) = -1\}
\end{equation}
We claim the following result.
\begin{theorem}
	$O_\pi^{+}$ and $O_\pi^{-}$ are the two $A_n$ orbits that $O_{S_n}((I_\pi, J_\pi))$ splits into.
\end{theorem}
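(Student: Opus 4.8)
The plan is to combine three results already established: that $O_{S_n}((I_\pi, J_\pi))$ splits into \emph{exactly} two $A_n$ orbits (Theorem \ref{SnOrbitSplit}, since $\pi$ has $n-1$ or $n$ blocks); that $f_\pi$ is $A_n$-invariant (Proposition \ref{fpiAnequiv}); and that $f_\pi$ sends every basis vector indexed by this orbit to $\pm 1$ and every other basis vector to $0$ (Theorem \ref{fpimaps}). The strategy is to show that $O_\pi^{+}$ and $O_\pi^{-}$ are two non-empty, disjoint, $A_n$-invariant subsets whose union is $O_{S_n}((I_\pi, J_\pi))$. Since this $S_n$ orbit is the disjoint union of only two $A_n$ orbits, each of the two subsets must then coincide with one of them.

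First I would record that $O_\pi^{+}$ and $O_\pi^{-}$ are disjoint by their definitions in (\ref{Opiplus}) and (\ref{Opiminus}), and that $O_\pi^{+} \cup O_\pi^{-} = O_{S_n}((I_\pi, J_\pi))$; this is immediate from Theorem \ref{fpimaps}, as $f_\pi$ takes only the values $\pm 1$ on the indices lying in this orbit. Next I would show each set is $A_n$-invariant. Taking $(I,J) \in O_\pi^{+}$ and $\sigma \in A_n$, the tuple $\sigma(I,J)$ still lies in $O_{S_n}((I_\pi, J_\pi))$ because $A_n \subseteq S_n$, and by the $A_n$-invariance of $f_\pi$ from Proposition \ref{fpiAnequiv},
\begin{equation}
	f_\pi(e_{\sigma(I,J)}) = f_\pi(\rho_{l+k}(\sigma)[e_{(I,J)}]) = f_\pi(e_{(I,J)}) = +1,
\end{equation}
so $\sigma(I,J) \in O_\pi^{+}$. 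The identical argument applies to $O_\pi^{-}$, so both sets are unions of $A_n$ orbits.

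It then remains to show that neither set is empty. For $O_\pi^{+}$, the block-labelling representative $(I_\pi, J_\pi)$ satisfies $g_\pi(e_{(I_\pi, J_\pi)}) = e_{K_\pi}$ with $K_\pi = (1, 2, \dots, n)$, so $f_\pi(e_{(I_\pi, J_\pi)}) = \det(e_{K_\pi}) = +1$ and $(I_\pi, J_\pi) \in O_\pi^{+}$. For $O_\pi^{-}$, I would fix any transposition $\tau \in S_n \setminus A_n$ and consider $\tau(I_\pi, J_\pi) \in O_{S_n}((I_\pi, J_\pi))$. By the form of $g_\pi$ in (\ref{gpiform}) together with the description of $O_{S_n}((K_\pi, I_\pi, J_\pi))$ in (\ref{OKpiIpiJpi}), we have $g_\pi(e_{\tau(I_\pi, J_\pi)}) = e_{\tau(K_\pi)} = \rho_n(\tau)[e_{K_\pi}]$, and since $\tau$ is odd, applying (\ref{detSnequiv}) gives $f_\pi(e_{\tau(I_\pi, J_\pi)}) = -\det(e_{K_\pi}) = -1$, so $\tau(I_\pi, J_\pi) \in O_\pi^{-}$. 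With both classes non-empty, disjoint, $A_n$-invariant, and covering an $S_n$ orbit that decomposes into precisely two $A_n$ orbits, each class must be a single $A_n$ orbit, and together they are the two orbits that $O_{S_n}((I_\pi, J_\pi))$ splits into.

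The step I expect to require the most care is the non-emptiness of $O_\pi^{-}$, which is really the crux of the argument: it is exactly the assertion that the jellyfish genuinely \emph{separates} the two $A_n$ orbits rather than being constant across the whole $S_n$ orbit. Verifying it cleanly depends on tracking how $g_\pi$ transports the action of an odd permutation onto the top row $K_\pi$ and then invoking the sign behaviour of $\det$; everything else is a formal consequence of the results already in hand.
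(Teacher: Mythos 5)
Your proposal is correct and follows essentially the same route as the paper's proof: it combines the $A_n$-invariance of $f_\pi$ (Proposition \ref{fpiAnequiv}) with the $\pm 1$/$0$ trichotomy (Theorem \ref{fpimaps}) and the fact that the orbit splits into exactly two $A_n$ orbits (Theorem \ref{SnOrbitSplit}). The only difference is that you explicitly verify that both $O_\pi^{+}$ and $O_\pi^{-}$ are non-empty, by evaluating $f_\pi$ on $(I_\pi,J_\pi)$ and on $\tau(I_\pi,J_\pi)$ for an odd $\tau$; the paper leaves this step implicit, so your version is, if anything, slightly more complete.
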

\begin{proof}
	It is enough to show that if $(I,J) \in O_\pi^{+}$, then $\sigma(I,J) \in O_\pi^{+}$ for all $\sigma \in A_n$, and
	if $(I,J) \in O_\pi^{-}$, then $\sigma(I,J) \in O_\pi^{-}$ for all $\sigma \in A_n$,
	where the action of $\sigma$ on the pair $(I,J)$ is given in (\ref{sigmapairaction}).
	Indeed, by Proposition \ref{fpiAnequiv}, we have that
	\begin{equation}
		f_\pi(e_{\sigma(I,J)})
		=
		f_\pi(\rho_{l+k}(\sigma)[e_{(I,J)}])
		= 
		f_\pi(e_{(I,J)})
	\end{equation}
	Applying Theorem \ref{fpimaps} gives the result.
\end{proof}

Relabelling 
$O_\pi^{+}$ as $O_{A_n}((I_\pi^{+}, J_\pi^{+}))$, and
$O_\pi^{-}$ as $O_{A_n}((I_\pi^{-}, J_\pi^{-}))$, we obtain the two basis elements of 
	$\Hom_{A_n}((\mathbb{R}^{n})^{\otimes k},(\mathbb{R}^{n})^{\otimes l})$ 
from the one set partition $\pi \in \Pi_{l+k,n}$ that has either $n-1$ or $n$ blocks,
	namely
\begin{equation} \label{Xpiplus}
	X_\pi^{+} \coloneqq \sum_{(I,J) \in O_{A_n}((I_\pi^{+}, J_\pi^{+}))} E_{I,J}
\end{equation}
and
\begin{equation} \label{Xpiminus}
	X_\pi^{-} \coloneqq \sum_{(I,J) \in O_{A_n}((I_\pi^{-}, J_\pi^{-}))} E_{I,J}
\end{equation}

We summarise the above results into the following theorem.
\begin{theorem} \label{mainAnthm}
	For $l, k \in \mathbb{Z}_{\geq 0}, n \in \mathbb{Z}_{\geq 1}$,
	the union of the two sets
	\begin{equation}
		\{X_\pi \mid \pi \in \Pi_{l+k,n-2}\}
	\end{equation}
	\begin{equation}
		\{X_\pi^{+}, X_\pi^{-} \mid \pi \in \Pi_{l+k,n}\setminus\Pi_{l+k,n-2}\}
	\end{equation}
	forms a basis of 
	$\Hom_{A_n}((\mathbb{R}^{n})^{\otimes k}, (\mathbb{R}^{n})^{\otimes l})$.
	Its dimension is given in Theorem \ref{Andimension}.
\end{theorem}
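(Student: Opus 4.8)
The plan is to assemble the results already established into a single orbit-counting argument. By the same reasoning that led to (\ref{centeq1}) and Theorem~\ref{Sncorresp}, but with $A_n$ in place of $S_n$ throughout, the condition $f_{\sigma(I),\sigma(J)} = f_{I,J}$ holding for all $\sigma \in A_n$ shows that $f \in \Hom_{A_n}((\mathbb{R}^{n})^{\otimes k}, (\mathbb{R}^{n})^{\otimes l})$ if and only if $f$ is constant on each orbit of the $A_n$-action (\ref{sigmapairaction}) on $[n]^{l+k}$. Since these orbits partition $[n]^{l+k}$, summing the matrix units $E_{I,J}$ over each $A_n$ orbit produces a collection of matrices that is linearly independent (each $E_{I,J}$ occurs in exactly one such sum) and spanning (any $A_n$-equivariant $f$ is a combination of them). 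Hence the collection of these orbit-sums is a basis of $\Hom_{A_n}((\mathbb{R}^{n})^{\otimes k}, (\mathbb{R}^{n})^{\otimes l})$, and it remains only to check that the listed set is exactly this collection, indexed without omission or repetition.

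First I would record that every $A_n$ orbit lies inside a unique $S_n$ orbit, since $A_n \subseteq S_n$, and that the $S_n$ orbits are in bijection with $\Pi_{l+k,n}$ by Theorem~\ref{Sncorresp}. I would then split the indexing set $\Pi_{l+k,n}$ according to the number of blocks $t$. For $\pi$ with $t \leq n-2$, Theorem~\ref{SnOrbitSplit} gives $O_{S_n}((I_\pi,J_\pi)) = O_{A_n}((I_\pi,J_\pi))$, so this $S_n$ orbit is itself a single $A_n$ orbit whose orbit-sum is precisely $X_\pi$ as in (\ref{equivbasiselement}). Ranging over $\pi \in \Pi_{l+k,n-2}$ therefore accounts for exactly the $A_n$ orbits sitting inside $S_n$ orbits of at most $n-2$ blocks, giving the first listed set $\{X_\pi \mid \pi \in \Pi_{l+k,n-2}\}$.

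Next I would treat $\pi$ with $t = n-1$ or $t = n$. By Theorem~\ref{SnOrbitSplit}, each such $S_n$ orbit splits into exactly two $A_n$ orbits, and the theorem identifying $O_\pi^{+}$ and $O_\pi^{-}$ shows that these two $A_n$ orbits are precisely the level sets of $f_\pi$ at the values $+1$ and $-1$, with $f_\pi$ vanishing off $O_{S_n}((I_\pi,J_\pi))$ by Theorem~\ref{fpimaps}. Their orbit-sums are exactly $X_\pi^{+}$ and $X_\pi^{-}$ of (\ref{Xpiplus}) and (\ref{Xpiminus}). Ranging over $\pi \in \Pi_{l+k,n} \setminus \Pi_{l+k,n-2}$ thus produces precisely the $A_n$ orbits contained in $S_n$ orbits of $n-1$ or $n$ blocks, each exactly once, yielding the second listed set. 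Since every $A_n$ orbit falls into exactly one of these two cases, the union of the two sets is in bijection with the full collection of $A_n$ orbits, hence is the orbit-sum basis, and the dimension count follows at once from Theorem~\ref{Andimension}.

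The work here is essentially bookkeeping: the substance has already been carried out in Theorem~\ref{SnOrbitSplit} (the splitting behaviour) and in the jellyfish construction culminating in Theorem~\ref{fpimaps}. The one point that genuinely needs care is the completeness and disjointness of the enumeration, namely that no $A_n$ orbit is counted twice (distinct set partitions index disjoint $S_n$ orbits, and $O_\pi^{+}, O_\pi^{-}$ are disjoint by construction) and that none is missed (every element of $[n]^{l+k}$ lies in some $S_n$ orbit, hence in one of the listed $A_n$ orbits). This verification is where I expect the only genuine obstacle, and it is a mild one.
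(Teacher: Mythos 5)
Your proposal is correct and follows essentially the same route as the paper, which presents Theorem \ref{mainAnthm} as a summary of the preceding results: the bijection between basis elements and $A_n$ orbits, the splitting criterion of Theorem \ref{SnOrbitSplit} for the $t \leq n-2$ versus $t \in \{n-1, n\}$ cases, and the identification of the two split orbits as $O_\pi^{+}$ and $O_\pi^{-}$ via the jellyfish map $f_\pi$. Your only addition is to make explicit the linear-independence and spanning argument for orbit sums of matrix units and the completeness of the enumeration, which the paper leaves implicit.
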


In Algorithm \ref{alg1},
we present some pseudocode 
for how to explicitly construct the 
weight matrix for an $A_n$-equivariant linear layer mapping
$(\mathbb{R}^{n})^{\otimes k} \rightarrow (\mathbb{R}^{n})^{\otimes l}$ in the standard basis of $\mathbb{R}^{n}$.
We assume that we have access to the following procedures:
\begin{itemize}
	\item \textsc{SymmGrpOrbit} calculates the $S_n$ orbit for a set partition $\pi \in \Pi_{l+k,n}$.
	\item \textsc{Flatten} takes a set partition diagram with $l$ vertices in the top row and $k$ vertices in the bottom row, and returns its equivalent flattened set partition diagram having a single row of $l+k$ vertices, as per Section \ref{setpartitions}.
	\item \textsc{NewTopRowConnect} takes a flattened set partition diagram having either $n-1$ or $n$ blocks, inserts a new top row consisting of $n$ vertices, and connects the lowest numbered vertex in each block $i$ to vertex $i$ in the top row.
	\item \textsc{AttachJellyfish} takes a set partition diagram $b_\pi$ with $n$ vertices in the top row and $l+k$ vertices in the bottom row having either $n-1$ or $n$ blocks, attaches a jellyfish with $n$ legs to the top row, and returns the function $f_\pi$ that is associated with this new diagram.
\end{itemize}

We give an example that explicitly shows how to construct such a weight matrix in the Technical Appendix, in the case where $n=2$, $k=2$ and $l=1$.

We appreciate that there will be some technical challenges 
when implementing Algorithm \ref{alg1} given the current state of computer hardware. 
We discuss this in more detail in the Technical Appendix.

It is possible to extend our results by looking at linear layer functions that are equivariant to a direct product of alternating groups; 
this is given in full in
the Technical Appendix.

\begin{algorithm}[tb]
	\caption{How to Calculate the Weight Matrix for an $A_n$ Equivariant Linear Layer Mapping $(\mathbb{R}^{n})^{\otimes k} \rightarrow (\mathbb{R}^{n})^{\otimes l}$}
	\label{alg1}
\begin{algorithmic}
   \STATE {\bfseries Input:} $n$, $k$, $l$
   \STATE {\bfseries Output:} Weight Matrix $M$
   \STATE Initialize $M = 0$.
   \FORALL{$\pi \in \Pi_{l+k,n-2}$}
	\STATE $O_{S_n}((I_\pi, J_\pi)) = $ \textsc{SymmGrpOrbit}$(\pi, n)$
	\STATE $X_\pi = \sum_{(I,J) \in O_{S_n}((I_{\pi},J_{\pi}))} E_{I,J}$
	\STATE $M = M + \lambda_\pi X_\pi$
   \ENDFOR
   \FORALL{$\pi \in \Pi_{l+k,n} \setminus \Pi_{l+k,n-2}$}
	\STATE $d_\pi = $ \textsc{Flatten}$(d_\pi)$
	\STATE $b_\pi = $ \textsc{NewTopRowConnect}$(d_\pi, n)$
	\STATE $f_\pi = $ \textsc{AttachJellyfish}$(b_\pi, n)$
	\STATE $O_{S_n}((I_\pi, J_\pi)) = $ \textsc{SymmGrpOrbit}$(\pi, n)$
	\FORALL{$(I,J) \in O_{S_n}((I_\pi, J_\pi))$}
		\IF{$f_\pi(e_{(I,J)}) = +1$}
			\STATE $O_{A_n}((I_\pi^{+}, J_\pi^{+})).$\textsc{Append}$((I,J))$
		\ELSE
			\STATE $O_{A_n}((I_\pi^{-}, J_\pi^{-})).$\textsc{Append}$((I,J))$
		\ENDIF
   	\ENDFOR
	\STATE $X_\pi^{+} = \sum_{(I,J) \in O_{A_n}((I_\pi^{+}, J_\pi^{+}))} E_{I,J}$
	\STATE $X_\pi^{-} = \sum_{(I,J) \in O_{A_n}((I_\pi^{-}, J_\pi^{-}))
} E_{I,J}$
	\STATE $M = M + \lambda_\pi^{+} X_\pi^{+} + \lambda_\pi^{-} X_\pi^{-}$
   \ENDFOR
   \STATE {\bfseries Return:} $M$
\end{algorithmic}
\end{algorithm}

\section{Adding Features and Biases}

\subsection{Features} 

We have assumed throughout that the feature dimension for all of the layers appearing in the neural network is one.
We can adapt all of the results that have been shown for the case where the feature dimension of the layers is greater than one.

Suppose that an $r$-order tensor has a feature space of dimension $d_r$.
We now wish to find a basis for 
\begin{equation} \label{Homfeatures}
	\Hom_{A_n}((\mathbb{R}^{n})^{\otimes k} \otimes \mathbb{R}^{d_k}, (\mathbb{R}^{n})^{\otimes l} \otimes \mathbb{R}^{d_l})
\end{equation}
in the standard basis of $\mathbb{R}^{n}$. 
	
Such a basis can be found by making the following substitutions,
where now $i \in [d_l]$ and $j \in [d_k]$:
\begin{itemize}
	\item replace $E_{I,J}$ by $E_{I,i,J,j}$ in (\ref{equivbasiselement}), (\ref{Xpiplus}), and (\ref{Xpiminus})
	\item relabel $X_\pi$ by $X_{\pi, i, j}$, 
		$X_\pi^{+}$ by $X_{\pi, i, j}^{+}$, and
		$X_\pi^{-}$ by $X_{\pi, i, j}^{-}$.
\end{itemize}
Consequently, a basis for (\ref{Homfeatures}) in the standard basis of $\mathbb{R}^{n}$ is given by the union of the two sets
	\begin{equation}
		\{X_{\pi, i, j} \mid \pi \in \Pi_{l+k,n-2}, i \in [d_l], j \in [d_k]\}
	\end{equation}
	\begin{equation}
		\left\{X_{\pi, i, j}^{+}, X_{\pi, i, j}^{-} \;\middle\vert 
		\begin{array}{l}
		\pi \in \Pi_{l+k,n}\setminus\Pi_{l+k,n-2}, \\
		i \in [d_l], j \in [d_k]
		\end{array}
		\right\}
	\end{equation}

\subsection{Biases} 
Including bias terms in the layer functions of a $A_n$-equivariant neural network is harder, but it can be done.
For the learnable linear layers of the form
$\Hom_{A_n}((\mathbb{R}^{n})^{\otimes k}, (\mathbb{R}^{n})^{\otimes l})$,
Pearce--Crump~\yrcite{pearcecrump} shows that the $A_n$-equivariance of the bias function, 
$\beta : ((\mathbb{R}^{n})^{\otimes k}, \rho_k) \rightarrow 
((\mathbb{R}^{n})^{\otimes l}, \rho_l)$,
needs to satisfy
	\begin{equation} \label{Anbiasequiv}
		c = \rho_l(g)c
	\end{equation}
for all $g \in A_n$ and $c \in (\mathbb{R}^{n})^{\otimes l}$.

Since any $c \in (\mathbb{R}^{n})^{\otimes l}$
satisfying (\ref{Anbiasequiv}) can be viewed as an element of 
$\Hom_{A_n}(\mathbb{R}, (\mathbb{R}^{n})^{\otimes l})$,
to find the matrix form of $c$, all we need to do is to find a basis for 
$\Hom_{A_n}(\mathbb{R}, (\mathbb{R}^{n})^{\otimes l})$.

But this is simply a matter of applying Theorem \ref{mainAnthm},
setting $k = 0$.

\section{Related Work}

The theory for the alternating group has its roots in the theory for the symmetric group and its links to the partition algebra.
Jones~\yrcite{Jones} constructed a surjective algebra homomorphism between the partition algebra $P_k(n)$ and the centraliser algebra of the symmetric group, $\End_{S_n}((\mathbb{R}^{n})^{\otimes k})$.
Most notably, Benkart and Halverson went on to develop much of the theory for the duality between the symmetric group and the partition algebra in a number of important papers~\yrcite{BHH, BenHal1, BenHal2}.
Bloss~\yrcite{bloss} used the result of Jones~\yrcite{Jones} to study the centralizer algebra of the alternating group, $\End_{A_n}((\mathbb{R}^{n})^{\otimes k})$.
He showed that the partition algebra $P_k(n)$, which has a basis consisting of set partition diagrams having two rows of $k$ vertices, is isomorphic to the centralizer algebra when $n \geq 2k + 2$.
He also highlighted the difficulty of finding a diagrammatic approach for characterising $\End_{A_n}((\mathbb{R}^{n})^{\otimes k})$ in the remaining cases since he recognised that the $S_n$ orbits that correspond bijectively with the set partition diagrams split in these cases.
Comes~\yrcite{comes} solved this problem and extended it to all $\Hom$--spaces
$\Hom_{A_n}((\mathbb{R}^{n})^{\otimes k}, (\mathbb{R}^{n})^{\otimes l})$, developing much of the theory in the process.
In particular, he came up with the idea of using the 
determinant map to show how the $S_n$ orbits split, and introduced jellyfish to represent this map in its diagrammatic form.

Finding the form of neural networks that are equivariant to a particular group has become an important area of research.
Zaheer et al.~\yrcite{deepsets} introduced the first permutation equivariant neural network, called Deep Sets, for learning from sets in a permutation equivariant manner. 
Maron et al.~\yrcite{maron2018} were the first to study the problem of classifying all of the linear permutation equivariant and invariant neural network layers, with their motivation coming from learning relations between the nodes of graphs. 
They characterised all of the learnable, linear, permutation equivariant layer functions in 
$\Hom_{S_n}((\mathbb{R}^{n})^{\otimes k}, (\mathbb{R}^{n})^{\otimes l})$
in the practical cases (specifically, when $n \geq k+l$). 
They used some equalities involving Kronecker products to find a number of fixed point equations which they solved to find a basis, in tensor form, for the layer functions under consideration.

As discussed in the Introduction, 
the approach taken in this paper to
characterise all of the learnable, linear, equivariant layer functions in 
$\Hom_{A_n}((\mathbb{R}^{n})^{\otimes k}, (\mathbb{R}^{n})^{\otimes l})$
is similar to the one seen in the papers written by
Pearce--Crump~\yrcite{pearcecrump, pearcecrumpB}.
They used various sets of set partition diagrams to characterise
all of the learnable, linear, equivariant layer functions in 
$\Hom_{G}((\mathbb{R}^{n})^{\otimes k}, (\mathbb{R}^{n})^{\otimes l})$
when $G$ is any of the following groups:
the symmetric group $S_n$, the orthogonal group $O(n)$, the symplectic group $Sp(n)$, and the special orthogonal group $SO(n)$.

\section{Conclusion}

We are the first to show how the combinatorics underlying set partition diagrams, together with some jellyfish representing the determinant map, provides the theoretical background for constructing neural networks that are equivariant to the alternating group when the layers are some tensor power of $\mathbb{R}^{n}$.
We looked at the problem of calculating the form of the learnable, linear, $A_n$--equivariant layer functions between such tensor power spaces in the standard basis of $\mathbb{R}^{n}$.
We achieved this by finding a basis for the $\Hom$--spaces in which these layer functions live.
In particular, we showed how set partition diagrams correspond to a number of basis elements, and we used jellyfish to identify the individual basis elements when a set partition diagram corresponded to more than one basis element.
In doing so, we calculated the number of parameters that appear in these layer functions. 
We also generalised our approach to show how to construct neural networks that are equivariant to local symmetries.

\section*{Acknowledgements}

The author would like to thank his PhD supervisor Professor William J. Knottenbelt for being generous with his time throughout the author's period of research prior to the publication of this paper.

This work was funded by the Doctoral Scholarship for Applied Research which was awarded to the author under Imperial College London's Department of Computing Applied Research scheme.
This work will form part of the author's PhD thesis at Imperial College London.

\nocite{*}
\bibliography{references}

\begin{thebibliography}{27}
\providecommand{\natexlab}[1]{#1}
\providecommand{\url}[1]{\texttt{#1}}
\expandafter\ifx\csname urlstyle\endcsname\relax
  \providecommand{\doi}[1]{doi: #1}\else
  \providecommand{\doi}{doi: \begingroup \urlstyle{rm}\Url}\fi

\bibitem[Barcelo \& Ram(1997)Barcelo and Ram]{barceloram}
Barcelo, H. and Ram, A.
\newblock Combinatorial {R}epresentation {T}heory, 1997.
\newblock \texttt{arXiv:math/9707221}.

\bibitem[Benkart \& Halverson(2019{\natexlab{a}})Benkart and
  Halverson]{BenHal1}
Benkart, G. and Halverson, T.
\newblock Partition algebras {$P_k(n)$} with $2k>n$ and the fundamental
  theorems of invariant theory for the symmetric group {$S_n$}.
\newblock \emph{J. London Math. Soc}, 99\penalty0 (2):\penalty0 194--224,
  2019{\natexlab{a}}.
\newblock URL \url{https://doi.org/10.1112/jlms.12175}.

\bibitem[Benkart \& Halverson(2019{\natexlab{b}})Benkart and
  Halverson]{BenHal2}
Benkart, G. and Halverson, T.
\newblock Partition {A}lgebras and the {I}nvariant {T}heory of the {S}ymmetric
  {G}roup.
\newblock In Barcelo, H., Karaali, G., and Orellana, R. (eds.), \emph{Recent
  Trends in Algebraic Combinatorics}, volume~16 of \emph{Association for Women
  in Mathematics Series}, pp.\  1--41. Springer, 2019{\natexlab{b}}.
\newblock URL \url{https://doi.org/10.1007/978-3-030-05141-9}.

\bibitem[Benkart et~al.(2017)Benkart, Halverson, and Harman]{BHH}
Benkart, G., Halverson, T., and Harman, N.
\newblock Dimensions of irreducible modules for partition algebras and tensor
  power multiplicities for symmetric and alternating groups.
\newblock \emph{J. Algebraic Combin.}, 46\penalty0 (1):\penalty0 77--108, 2017.
\newblock ISSN 0925-9899.
\newblock URL \url{https://doi.org/10.1007/s10801-017-0748-4}.

\bibitem[Bloss(2005)]{bloss}
Bloss, M.
\newblock The {P}artition {A}lgebra as a {C}entralizer {A}lgebra of the
  {A}lternating {G}roup.
\newblock \emph{Communications in Algebra}, 33:\penalty0 2219--2229, 2005.
\newblock URL \url{https://doi.org/10.1081/AGB-200063579}.

\bibitem[Brauer(1937)]{Brauer}
Brauer, R.
\newblock On {A}lgebras {W}hich {A}re {C}onnected with the {S}emisimple
  {C}ontinuous {G}roups.
\newblock \emph{Ann. Math.}, 38:\penalty0 857--872, 1937.
\newblock ISSN 0003486X.
\newblock URL \url{https://doi.org/10.2307/1968843}.

\bibitem[Ceccherini-Silberstein et~al.(2010)Ceccherini-Silberstein, Scarabotti,
  and Tolli]{tolli}
Ceccherini-Silberstein, T., Scarabotti, F., and Tolli, F.
\newblock \emph{Representation Theory of the Symmetric Groups}.
\newblock Cambridge University Press, 2010.

\bibitem[Cohen \& Welling(2016)Cohen and Welling]{cohenwelling}
Cohen, T. and Welling, M.
\newblock Group {E}quivariant {C}onvolutional {N}etworks.
\newblock In Balcan, M.~F. and Weinberger, K.~Q. (eds.), \emph{Proceedings of
  The 33rd International Conference on Machine Learning}, volume~48 of
  \emph{Proceedings of Machine Learning Research}, pp.\  2990--2999, New York,
  New York, USA, 20--22 Jun 2016. PMLR.
\newblock URL \url{https://proceedings.mlr.press/v48/cohenc16.html}.

\bibitem[Cohen et~al.(2019)Cohen, Weiler, Kicanaoglu, and Welling]{cohen19d}
Cohen, T., Weiler, M., Kicanaoglu, B., and Welling, M.
\newblock Gauge {E}quivariant {C}onvolutional {N}etworks and the {I}cosahedral
  {CNN}.
\newblock In Chaudhuri, K. and Salakhutdinov, R. (eds.), \emph{Proceedings of
  the 36th International Conference on Machine Learning}, volume~97 of
  \emph{Proceedings of Machine Learning Research}, pp.\  1321--1330. PMLR,
  09--15 Jun 2019.
\newblock URL \url{https://proceedings.mlr.press/v97/cohen19d.html}.

\bibitem[Comes(2020)]{comes}
Comes, J.
\newblock Jellyfish {P}artition {C}ategories.
\newblock \emph{Algebr Represent Theor}, 23:\penalty0 327--347, 2020.
\newblock URL \url{https://doi.org/10.1007/s10468-018-09851-7}.

\bibitem[Goodman \& Wallach(2009)Goodman and Wallach]{goodman}
Goodman, R. and Wallach, N.~R.
\newblock \emph{Symmetry, Representations and Invariants}.
\newblock Springer, 2009.

\bibitem[Halverson \& Ram(2005)Halverson and Ram]{HalRam}
Halverson, T. and Ram, A.
\newblock Partition algebras.
\newblock \emph{European J. Combin.}, 26\penalty0 (6):\penalty0 869--921, 2005.
\newblock ISSN 0195-6698.
\newblock URL \url{https://doi.org/10.1016/j.ejc.2004.06.005}.

\bibitem[Halverson \& Ram(2022)Halverson and Ram]{HalRam2022}
Halverson, T. and Ram, A.
\newblock Gems from the {W}ork of {G}eorgia {B}enkart.
\newblock \emph{Notices of the American Mathematical Society}, 69\penalty0
  (3):\penalty0 375--384, March 2022.
\newblock URL \url{https://doi.org/10.1090/noti2447}.

\bibitem[Jones(1994)]{Jones}
Jones, V.
\newblock The {P}otts model and the symmetric group.
\newblock In Araki, H., Kawahigashi, Y., and Kosaki, H. (eds.),
  \emph{Subfactors: Proceedings of the {T}aniguchi Symposium on Operator
  Algebras ({K}yuzeso, 1993)}, pp.\  259--267. World Scientific, 1994.

\bibitem[Kicki et~al.(2020)Kicki, Ozay, and Skrzypczynski]{kicki}
Kicki, P., Ozay, M., and Skrzypczynski, P.
\newblock A {C}omputationally {E}fficient {N}eural {N}etwork {I}nvariant to the
  {A}ction of {S}ymmetry {S}ubgroups, 2020.
\newblock \texttt{arXiv:2002.07528}.

\bibitem[Kondor et~al.(2018)Kondor, Lin, and Trivedi]{clebschgordan}
Kondor, R., Lin, Z., and Trivedi, S.
\newblock Clebsch\textendash {G}ordan {N}ets: a {F}ully {F}ourier {S}pace
  {S}pherical {C}onvolutional {N}eural {N}etwork.
\newblock In Bengio, S., Wallach, H., Larochelle, H., Grauman, K.,
  Cesa-Bianchi, N., and Garnett, R. (eds.), \emph{Advances in Neural
  Information Processing Systems}, volume~31. Curran Associates, Inc., 2018.
\newblock URL
  \url{https://proceedings.neurips.cc/paper/2018/file/a3fc981af450752046be179185ebc8b5-Paper.pdf}.

\bibitem[Lim \& Nelson(2022)Lim and Nelson]{lim}
Lim, L.-H. and Nelson, B.~J.
\newblock What is an equivariant neural network?, 2022.
\newblock \texttt{arXiv:2205.07362}.

\bibitem[Maron et~al.(2019{\natexlab{a}})Maron, Ben-Hamu, Shamir, and
  Lipman]{maron2018}
Maron, H., Ben-Hamu, H., Shamir, N., and Lipman, Y.
\newblock Invariant and {E}quivariant {G}raph {N}etworks.
\newblock In \emph{International Conference on Learning Representations},
  2019{\natexlab{a}}.
\newblock URL \url{https://openreview.net/forum?id=Syx72jC9tm}.

\bibitem[Maron et~al.(2019{\natexlab{b}})Maron, Fetaya, Segol, and
  Lipman]{maron19a}
Maron, H., Fetaya, E., Segol, N., and Lipman, Y.
\newblock On the {U}niversality of {I}nvariant {N}etworks.
\newblock In Chaudhuri, K. and Salakhutdinov, R. (eds.), \emph{Proceedings of
  the 36th International Conference on Machine Learning}, volume~97 of
  \emph{Proceedings of Machine Learning Research}, pp.\  4363--4371. PMLR,
  09--15 Jun 2019{\natexlab{b}}.
\newblock URL \url{https://proceedings.mlr.press/v97/maron19a.html}.

\bibitem[Maron et~al.(2020)Maron, Litany, Chechik, and Fetaya]{maron20a}
Maron, H., Litany, O., Chechik, G., and Fetaya, E.
\newblock On {L}earning {S}ets of {S}ymmetric {E}lements.
\newblock In III, H.~D. and Singh, A. (eds.), \emph{Proceedings of the 37th
  International Conference on Machine Learning}, volume 119 of
  \emph{Proceedings of Machine Learning Research}, pp.\  6734--6744. PMLR,
  13--18 Jul 2020.
\newblock URL \url{https://proceedings.mlr.press/v119/maron20a.html}.

\bibitem[Pearce-Crump(2022{\natexlab{a}})]{pearcecrump}
Pearce-Crump, E.
\newblock {C}onnecting {P}ermutation {E}quivariant {N}eural {N}etworks and
  {P}artition {D}iagrams.
\newblock \texttt{arXiv:2212.08648}, 2022{\natexlab{a}}.

\bibitem[Pearce-Crump(2022{\natexlab{b}})]{pearcecrumpB}
Pearce-Crump, E.
\newblock {B}rauer's {G}roup {E}quivariant {N}eural {N}etworks.
\newblock \texttt{arXiv:2212.08630}, 2022{\natexlab{b}}.

\bibitem[Ravanbakhsh(2020)]{ravanbakhsh}
Ravanbakhsh, S.
\newblock Universal {E}quivariant {M}ultilayer {P}erceptrons.
\newblock In \emph{Proceedings of the 37th International Conference on Machine
  Learning, ICML 2020, 13-18 July 2020, Virtual Event}, volume 119 of
  \emph{Proceedings of Machine Learning Research}, pp.\  7996--8006. PMLR,
  2020.
\newblock URL \url{http://proceedings.mlr.press/v119/ravanbakhsh20a.html}.

\bibitem[Sagan(2000)]{sagan}
Sagan, B.~E.
\newblock \emph{The Symmetric Group: Representations, Combinatorial Algorithms,
  and Symmetric Functions}.
\newblock Springer, 2000.

\bibitem[Segal(2014)]{segal}
Segal, E.
\newblock {G}roup {R}epresentation {T}heory.
\newblock Course Notes for Imperial College London, 2014.

\bibitem[Zaheer et~al.(2017)Zaheer, Kottur, Ravanbakhsh, Poczos, Salakhutdinov,
  and Smola]{deepsets}
Zaheer, M., Kottur, S., Ravanbakhsh, S., Poczos, B., Salakhutdinov, R.~R., and
  Smola, A.~J.
\newblock Deep {S}ets.
\newblock In Guyon, I., Luxburg, U.~V., Bengio, S., Wallach, H., Fergus, R.,
  Vishwanathan, S., and Garnett, R. (eds.), \emph{Advances in Neural
  Information Processing Systems}, volume~30. Curran Associates, Inc., 2017.
\newblock URL
  \url{https://proceedings.neurips.cc/paper/2017/file/f22e4747da1aa27e363d86d40ff442fe-Paper.pdf}.

\bibitem[Zhang et~al.(2019)Zhang, Liwicki, Smith, and Cipolla]{zhang}
Zhang, C., Liwicki, S., Smith, W., and Cipolla, R.
\newblock Orientation-{A}ware {S}emantic {S}egmentation on {I}cosahedron
  {S}pheres.
\newblock In \emph{2019 IEEE/CVF International Conference on Computer Vision
  (ICCV)}, pp.\  3532--3540, Los Alamitos, CA, USA, nov 2019. IEEE Computer
  Society.
\newblock \doi{10.1109/ICCV.2019.00363}.
\newblock URL
  \url{https://doi.ieeecomputersociety.org/10.1109/ICCV.2019.00363}.

\end{thebibliography}
\bibliographystyle{icml2023}

\newpage
\appendix
\onecolumn

\section{The Weight Matrix for $S_2$ and $A_2$--Equivariant Linear Layer Functions from $(\mathbb{R}^{2})^{\otimes 2}$ to $\mathbb{R}^{2}$}

	We first show how to find a basis of 
	$\Hom_{S_2}((\mathbb{R}^{2})^{\otimes 2}, \mathbb{R}^{2})$, 
	where, in this case, $n=2$, $k=2$, and $l=1$.

	To do this, we first need to find all set partitions in $\Pi_{1+2,2}$, that is, all set partitions $\pi$ of $3$ having at most $2$ blocks.
	The first column of Figure \ref{matrix2,2^1} shows these set partitions in their equivalent diagram form, $d_\pi$.
	Each of these set partition diagrams corresponds to a basis element in 
	$\Hom_{S_2}((\mathbb{R}^{2})^{\otimes 2}, \mathbb{R}^{2})$, by Theorem \ref{Sncorresp}.

	To obtain the basis elements themselves, we first recall that each set partition $\pi$ corresponds to an orbit 
	$O_{S_2}((I_\pi, J_\pi))$
	coming from the action of $S_2$ on $[2]^{1+2}$.
	A representative of each orbit, called the block labelling, is given in the third column of Figure \ref{matrix2,2^1},
	and it is found by letting the $i^{\text{th}}$ position in $[1+2]$ be the label of the block in $\pi$ containing the number $i$, where the blocks of $\pi$ have themselves been labelled by letting
$B_1$ be the block that contains the number $1 \in [1+2]$, and, if $\pi$ has $2$ blocks, letting $B_2$ be the other block in $\pi$.

	Finally, we form a basis element $X_\pi$
	of 
	$\Hom_{S_2}((\mathbb{R}^{2})^{\otimes 2}, \mathbb{R}^{2})$ 
	by summing over all matrix units in
	$\Hom((\mathbb{R}^{2})^{\otimes 2}, \mathbb{R}^{2})$
	that are indexed by the elements of
	$O_{S_2}((I_\pi, J_\pi))$, as stated in (\ref{equivbasiselement}).
	These basis elements are given in the fourth column of Figure \ref{matrix2,2^1}.

	\begin{figure}[h]
\begin{center}
\begin{tblr}{
  colspec = {X[c,h]X[c]X[c]X[c]},
  stretch = 0,
  rowsep = 5pt,
  hlines = {1pt},
  vlines = {1pt},
}
	{Set Partition Diagram \\ $d_\pi$} & 
	{Partition \\ $\pi$} & 
	{Block Labelling \\ $(I_{\pi} \mid J_{\pi})$}	& 
	{Standard Basis Element \\ $X_\pi$ } \\
	\scalebox{0.6}{\tikzfig{orbit21sq1}} & $\{1, 2, 3\}$ & $\{1 \mid 1, 1\}$ & 
	\scalebox{0.75}{
	$
	\NiceMatrixOptions{code-for-first-row = \scriptstyle \color{blue},
                   	   code-for-first-col = \scriptstyle \color{blue}
	}
	\begin{bNiceArray}{*{2}{c} | *{2}{c}}[first-row,first-col]
				& 1,1 		& 1,2	& 2,1	& 2,2 \\
		1		& 1	& 0	& 0	& 0	\\
		2		& 0	& 0	& 0	& 1	
	\end{bNiceArray}
	$}
	\\
	\scalebox{0.6}{\tikzfig{orbit21sq2}}	& $\{1, 2 \mid 3\}$ 	& $\{1 \mid 1, 2\}$ 	& 
	\scalebox{0.75}{
	$
	\NiceMatrixOptions{code-for-first-row = \scriptstyle \color{blue},
                   	   code-for-first-col = \scriptstyle \color{blue}
	}
	\begin{bNiceArray}{*{2}{c} | *{2}{c}}[first-row,first-col]
				& 1,1 		& 1,2	& 2,1	& 2,2 \\
		1		& 0	& 1	& 0	& 0	\\
		2		& 0	& 0	& 1	& 0	
	\end{bNiceArray}
	$}
	\\
	\scalebox{0.6}{\tikzfig{orbit21sq3}}	& $\{1, 3 \mid 2\}$ 	& $\{1 \mid 2, 1\}$ 	& 
	\scalebox{0.75}{
	$
	\NiceMatrixOptions{code-for-first-row = \scriptstyle \color{blue},
                   	   code-for-first-col = \scriptstyle \color{blue}
	}
	\begin{bNiceArray}{*{2}{c} | *{2}{c}}[first-row,first-col]
				& 1,1 		& 1,2	& 2,1	& 2,2 \\
		1		& 0	& 0	& 1	& 0	\\
		2		& 0	& 1	& 0	& 0	
	\end{bNiceArray}
	$}
	\\
	\scalebox{0.6}{\tikzfig{orbit21sq4}}	& $\{1 \mid 2, 3\}$ 	& $\{1 \mid 2, 2\}$ 	& 
	\scalebox{0.75}{
	$
	\NiceMatrixOptions{code-for-first-row = \scriptstyle \color{blue},
                   	   code-for-first-col = \scriptstyle \color{blue}
	}
	\begin{bNiceArray}{*{2}{c} | *{2}{c}}[first-row,first-col]
				& 1,1 		& 1,2	& 2,1	& 2,2 \\
		1		& 0	& 0	& 0	& 1	\\
		2		& 1	& 0	& 0	& 0	
	\end{bNiceArray}
	$}
\end{tblr}
	\caption{
		We use Theorem \ref{equivbasislplusk}
		to obtain a basis of
	$\Hom_{S_2}((\mathbb{R}^{2})^{\otimes 2}, \mathbb{R}^{2})$
	from all of the
	set partitions in $\Pi_{1+2,2}$.}
  	\label{matrix2,2^1}
	\end{center}
\end{figure}

Consequently, 
the weight matrix for an $S_2$--equivariant linear layer function from $(\mathbb{R}^{2})^{\otimes 2}$ to $\mathbb{R}^{2}$
is of the form
\begin{equation}
	\NiceMatrixOptions{code-for-first-row = \scriptstyle \color{blue},
                   	   code-for-first-col = \scriptstyle \color{blue}
	}
	\begin{bNiceArray}{*{2}{c} | *{2}{c}}[first-row,first-col]
				& 1,1 		& 1,2	& 2,1	& 2,2 \\
		1		& \lambda_1	& \lambda_2	& \lambda_3	& \lambda_4	\\
		2		& \lambda_4	& \lambda_3	& \lambda_2	& \lambda_1	
	\end{bNiceArray}
\end{equation}
for scalars $\lambda_1, \dots, \lambda_4 \in \mathbb{R}$.

	Next, we show how to find a basis of
	$\Hom_{A_2}((\mathbb{R}^{2})^{\otimes 2}, \mathbb{R}^{2})$. 
	Again, we consider all set partitions in $\Pi_{1+2,2}$.
	Since $n = 2$, the $S_2$ orbit corresponding to each set partition in $\Pi_{1+2,2}$
	splits because each set partition has either $1$ or $2$ blocks.

	We show in full how to find the two basis elements 
	$X_\pi^{+}$, $X_\pi^{-}$
	in
	$\Hom_{A_2}((\mathbb{R}^{2})^{\otimes 2}, \mathbb{R}^{2})$
	that corresponds to the first set partition diagram $d_\pi$ in Figure \ref{matrix2,2^1}, where $\pi = \{1,2,3\}$, and state what they are for the other set partition diagrams in Figure \ref{matrix2,2^1A2}.

	First, we flatten the set partition diagram $d_\pi$. 
	Then we add a new top row consisting of $n=2$ vertices, and connect the lowest numbered vertex in each block $i$ of $\pi$ to vertex $i$ in the top row.
	Hence, we obtain the diagram $b_\pi$, which is
	\begin{equation}
		\begin{aligned}
		\scalebox{0.6}{\tikzfig{flatorbit21sq1}}
		\end{aligned}
	\end{equation}
	Next, we attach a two-legged jellyfish to the top row of vertices, 
	giving	
	\begin{equation}
		\scalebox{0.6}{\tikzfig{jellyfishorbit21sq1}}
	\end{equation}
	This is the diagram that is associated with the function $f_\pi$ that is given in Proposition \ref{fpiAnequiv}.

	To obtain the two $A_2$ orbits 
	$O_{A_2}((I_\pi^+, J_\pi^+))$
	and 
	$O_{A_2}((I_\pi^-, J_\pi^-))$
	corresponding to $\pi$, we apply Theorem \ref{fpimaps}.

	From Figure \ref{matrix2,2^1}, we see that
	$O_{S_2}((I_\pi, J_\pi)) = \{(1,1,1), (2,2,2)\}$,
	and so, we have that
	\begin{equation}
		f_\pi(e_{(1,1,1)})
		= \det \circ \, g_\pi(e_{(1,1,1)})
		= \det(e_{(1,1)} + e_{(1,2)})
		= +1
	\end{equation}
	and
	\begin{equation}
		f_\pi(e_{(2,2,2)})
		= \det \circ \, g_\pi(e_{(2,2,2)})
		= \det(e_{(2,1)} + e_{(2,2)})
		= -1
	\end{equation}
	Hence, by
	(\ref{Opiplus}) and
	(\ref{Opiminus}),
	we have that
	\begin{equation}
		O_{A_2}((I_\pi^+, J_\pi^+))
		= \{(1,1,1)\}
	\end{equation}
	and
	\begin{equation}
		O_{A_2}((I_\pi^-, J_\pi^-))
		= \{(2,2,2)\}
	\end{equation}
	and so,
	by (\ref{Xpiplus}) and
	(\ref{Xpiminus}),
	\begin{equation}
		X_\pi^{+}
		= E_{(1 \mid 1,1)}
	\end{equation}
	and
	\begin{equation}
		X_\pi^{-}
		= E_{(2 \mid 2,2)}
	\end{equation}

Consequently, from the matrices given in Figure \ref{matrix2,2^1A2},
the weight matrix for an $A_2$--equivariant linear layer function from $(\mathbb{R}^{2})^{\otimes 2}$ to $\mathbb{R}^{2}$
is of the form
\begin{equation}
	\NiceMatrixOptions{code-for-first-row = \scriptstyle \color{blue},
                   	   code-for-first-col = \scriptstyle \color{blue}
	}
	\begin{bNiceArray}{*{2}{c} | *{2}{c}}[first-row,first-col]
				& 1,1 		& 1,2	& 2,1	& 2,2 \\
		1		& \lambda_1	& \lambda_3	& \lambda_5	& \lambda_7	\\
		2		& \lambda_8	& \lambda_6	& \lambda_4	& \lambda_2	
	\end{bNiceArray}
\end{equation}
for scalars $\lambda_1, \dots, \lambda_8 \in \mathbb{R}$.
	
	\begin{figure}[h]
\begin{center}
\begin{tblr}{
  colspec = {X[c,h]X[c]X[c]X[c]},
  stretch = 0,
  rowsep = 5pt,
  hlines = {1pt},
  vlines = {1pt},
}
	{Set Partition Diagram \\ $d_\pi$} 	& {Partition \\ $\pi$} 	& 
	{Standard Basis Element \\ $X_\pi^{+}$} & 
	{Standard Basis Element \\ $X_\pi^{-}$} \\
	\scalebox{0.6}{\tikzfig{orbit21sq1}} &  $\{1, 2, 3\}$ &
	\scalebox{0.75}{
	$
	\NiceMatrixOptions{code-for-first-row = \scriptstyle \color{blue},
                   	   code-for-first-col = \scriptstyle \color{blue}
	}
	\begin{bNiceArray}{*{2}{c} | *{2}{c}}[first-row,first-col]
				& 1,1 		& 1,2	& 2,1	& 2,2 \\
		1		& 1	& 0	& 0	& 0	\\
		2		& 0	& 0	& 0	& 0	
	\end{bNiceArray}
	$}
	& 
	\scalebox{0.75}{
	$
	\NiceMatrixOptions{code-for-first-row = \scriptstyle \color{blue},
                   	   code-for-first-col = \scriptstyle \color{blue}
	}
	\begin{bNiceArray}{*{2}{c} | *{2}{c}}[first-row,first-col]
				& 1,1 		& 1,2	& 2,1	& 2,2 \\
		1		& 0	& 0	& 0	& 0	\\
		2		& 0	& 0	& 0	& 1	
	\end{bNiceArray}
	$}
	\\
	\scalebox{0.6}{\tikzfig{orbit21sq2}}	& $\{1, 2 \mid 3\}$ 	& 
	\scalebox{0.75}{
	$
	\NiceMatrixOptions{code-for-first-row = \scriptstyle \color{blue},
                   	   code-for-first-col = \scriptstyle \color{blue}
	}
	\begin{bNiceArray}{*{2}{c} | *{2}{c}}[first-row,first-col]
				& 1,1 		& 1,2	& 2,1	& 2,2 \\
		1		& 0	& 1	& 0	& 0	\\
		2		& 0	& 0	& 0	& 0	
	\end{bNiceArray}
	$} &
	\scalebox{0.75}{
	$
	\NiceMatrixOptions{code-for-first-row = \scriptstyle \color{blue},
                   	   code-for-first-col = \scriptstyle \color{blue}
	}
	\begin{bNiceArray}{*{2}{c} | *{2}{c}}[first-row,first-col]
				& 1,1 		& 1,2	& 2,1	& 2,2 \\
		1		& 0	& 0	& 0	& 0	\\
		2		& 0	& 0	& 1	& 0	
	\end{bNiceArray}
	$}
	\\
	\scalebox{0.6}{\tikzfig{orbit21sq3}}	& $\{1, 3 \mid 2\}$ 	& 
	\scalebox{0.75}{
	$
	\NiceMatrixOptions{code-for-first-row = \scriptstyle \color{blue},
                   	   code-for-first-col = \scriptstyle \color{blue}
	}
	\begin{bNiceArray}{*{2}{c} | *{2}{c}}[first-row,first-col]
				& 1,1 		& 1,2	& 2,1	& 2,2 \\
		1		& 0	& 0	& 1	& 0	\\
		2		& 0	& 0	& 0	& 0	
	\end{bNiceArray}
	$} &
	\scalebox{0.75}{
	$
	\NiceMatrixOptions{code-for-first-row = \scriptstyle \color{blue},
                   	   code-for-first-col = \scriptstyle \color{blue}
	}
	\begin{bNiceArray}{*{2}{c} | *{2}{c}}[first-row,first-col]
				& 1,1 		& 1,2	& 2,1	& 2,2 \\
		1		& 0	& 0	& 0	& 0	\\
		2		& 0	& 1	& 0	& 0	
	\end{bNiceArray}
	$}
	\\
	\scalebox{0.6}{\tikzfig{orbit21sq4}}	& $\{1 \mid 2, 3\}$ 	& 
	\scalebox{0.75}{
	$
	\NiceMatrixOptions{code-for-first-row = \scriptstyle \color{blue},
                   	   code-for-first-col = \scriptstyle \color{blue}
	}
	\begin{bNiceArray}{*{2}{c} | *{2}{c}}[first-row,first-col]
				& 1,1 		& 1,2	& 2,1	& 2,2 \\
		1		& 0	& 0	& 0	& 1	\\
		2		& 0	& 0	& 0	& 0	
	\end{bNiceArray}
	$} &
	\scalebox{0.75}{
	$
	\NiceMatrixOptions{code-for-first-row = \scriptstyle \color{blue},
                   	   code-for-first-col = \scriptstyle \color{blue}
	}
	\begin{bNiceArray}{*{2}{c} | *{2}{c}}[first-row,first-col]
				& 1,1 		& 1,2	& 2,1	& 2,2 \\
		1		& 0	& 0	& 0	& 0	\\
		2		& 1	& 0	& 0	& 0	
	\end{bNiceArray}
	$}
\end{tblr}
	\caption{
		By considering which set partitions in $\Pi_{1+2,2}$ split, we obtain a basis of
	$\Hom_{A_2}((\mathbb{R}^{2})^{\otimes 2}, \mathbb{R}^{2})$.}
  	\label{matrix2,2^1A2}
	\end{center}
\end{figure}

Note that if $n=3$, then we also need to consider the set partition diagram
\begin{equation}
	\begin{aligned}
	\scalebox{0.6}{\tikzfig{orbit21sq5}}
	\end{aligned}
\end{equation}
corresponding to the set partition $\{1 \mid 2 \mid 3\}$, since this is a valid set partition in $\Pi_{1+2,3}$. 
In this case,
only the set partition $\{1,2,3\}$ does not split. 
Hence, while the dimension of 
$
	\Hom_{S_3}((\mathbb{R}^{3})^{\otimes 2}, \mathbb{R}^{3})
$
is $5$,
the dimension of 
$
	\Hom_{A_3}((\mathbb{R}^{3})^{\otimes 2}, \mathbb{R}^{3})
$
is $9$.

However, if $n \geq 4$, then none of the set partitions in $\Pi_{1+2,n}$
split, and so, in this case,
\begin{equation}
	\Hom_{S_n}((\mathbb{R}^{n})^{\otimes 2}, \mathbb{R}^{n})
	=
	\Hom_{A_n}((\mathbb{R}^{n})^{\otimes 2}, \mathbb{R}^{n})
\end{equation}
Consequently, the basis $\{X_\pi\}$, of size $5$, is the same for each space.

\section{Limitations and Feasibility}

It is important to acknowledge that given the
current limitations of hardware, there will be some challenges when
implementing the neural networks that are discussed in this paper.
In particular, significant engineering efforts will be needed to achieve the required scale because storing high-order tensors in memory
is not a straightforward task. 
This was demonstrated by Kondor et al.~\yrcite{clebschgordan}, who had to develop custom CUDA kernels in order to implement their tensor product based neural networks. 
Nevertheless, we anticipate that with the increasing availability of computing power, higher-order group equivariant neural networks will become more prevalent in practical applications. 
Notably, while the dimension of tensor power spaces increases exponentially with their order, the dimension of the space of equivariant maps between such tensor power spaces is often much smaller, and the corresponding matrices are typically sparse. 
Therefore, while storing these matrices may present some technical difficulties, it should be feasible with the current computing power that is available.

\section{Equivariance to Local Symmetries}

We can extend our results to looking at linear layer functions that are equivariant to a direct product of alternating groups; 
that is, we can construct neural networks that are equivariant to local symmetries.
The case for an external tensor product of order $1$ tensors can be found in \cite{maron20a}; below, we show the equivariance for any tensors of any order.

We wish to find a basis for 
\begin{equation} \label{genericHomspace}
	\Hom_{A_{n_1} \times \dots \times A_{n_p}}(
	(\mathbb{R}^{n_1})^{\otimes {k_1}} \boxtimes \dots \boxtimes (\mathbb{R}^{n_p})^{\otimes {k_p}},
	(\mathbb{R}^{n_1})^{\otimes {l_1}} \boxtimes \dots \boxtimes (\mathbb{R}^{n_p})^{\otimes {l_p}})
\end{equation}
where $\boxtimes$ is the external tensor product.

The $\Hom$-space given in (\ref{genericHomspace}) is isomorphic to
\begin{equation} \label{tensorgenericHomspace}
	\bigotimes_{r=1}^{p} \Hom_{A_{n_r}}((\mathbb{R}^{n_r})^{\otimes k_r}, (\mathbb{R}^{n_r})^{\otimes l_r})
\end{equation}

As Theorem \ref{mainAnthm} gives a basis for each individual $\Hom$--space in (\ref{tensorgenericHomspace}), we can obtain a basis for the overall $\Hom$--space (\ref{genericHomspace}) by forming all possible $p$--length Kronecker products of basis elements in the standard way for tensor product spaces.


\end{document}